\documentclass[
]{ceurart}

\usepackage{listings}
\usepackage{array}
\usepackage[most]{tcolorbox}
\usepackage{epigraph}
\usepackage{amsmath}
\usepackage{amssymb}
\usepackage{amsthm}

\theoremstyle{plain}
\newtheorem{theorem}{Theorem}[section]

\newtheorem{proposition}[theorem]{Proposition}
\newtheorem{corollary}[theorem]{Corollary}

\theoremstyle{definition}
\newtheorem{definition}[theorem]{Definition}

\theoremstyle{remark}
\newtheorem{remark}[theorem]{Remark}

\usepackage{graphicx}
\usepackage{pgfplots}
\usepackage{xcolor}
\usepackage{tikz}
\usetikzlibrary{arrows.meta}
\pgfplotsset{compat=1.18}
\usepgfplotslibrary{fillbetween,groupplots}

\newif\ifdraft
\draftfalse 
\ifdraft
  \usepackage[colorinlistoftodos]{todonotes}
\else
  \usepackage[disable]{todonotes}
\fi

\begin{document}

\copyrightyear{2026}
\copyrightclause{Copyright for this paper by its authors.
  Use permitted under Creative Commons License Attribution 4.0
  International (CC BY 4.0).}

\conference{AAAI-26 Workshop on Machine Ethics: from formal methods to emergent machine ethics, January 27, 2026, Singapore}

\title{Bounded Morality}
\title[mode=alt]{Defining the Space of Moral Computation}

\author[1]{Max Kanwal}[%
orcid=0009-0008-7021-3227,
email=kanwal@stanford.edu,
url=https://linkedin.com/in/mkanwal/,
]
\fnmark[1]
\cormark[1]
\address[1]{Stanford University}

\author[2]{Caryn Tran}[%
orcid=0000-0002-4645-6607,
email=caryn@u.northwestern.edu,
url=https://linkedin.com/in/caryntran/,
]
\fnmark[1]
\address[2]{Northwestern University}

\author[3]{Patrick Mineault}[%
orcid=0000-0001-5519-842X,
email=patrick.mineault@gmail.com,
url=https://www.linkedin.com/in/pmineault/,
]
\address[3]{Amaranth Foundation}

\fntext[1]{These authors contributed equally.}
\cortext[1]{Corresponding author.}

\begin{abstract}
  Moral cognition has traditionally been modeled as adherence to fixed ethical theories—deontology, consequentialism, virtue ethics—implemented as static rules or value functions. We propose Bounded Morality, a formal framework for analyzing the computational demands of moral problems faced by finite agents. Extending Herbert Simon’s notion of bounded rationality, we formalize moral situations along two orthogonal dimensions: moral breadth, the scope of entities treated as morally relevant, and moral depth, the inferential integration required to evaluate their interactions. Limited resources impose an unavoidable tradeoff between these dimensions, defining a feasible space of moral computation. Within this space, ethical theories correspond to locally efficient strategies adapted to different demand regimes rather than competing accounts of moral truth. The framework yields a formal notion of moral regret and moral progress under constraint, and implies that moral alignment in artificial systems depends on the scaling and allocation of moral reasoning capacity rather than on direct imitation of human judgments.
\end{abstract}

\begin{keywords}
  bounded rationality,
  resource rationality \sep 
  moral agents \sep 
  ethical theories \sep
  AI alignment \sep 
  moral progress
\end{keywords}

\maketitle

\epigraph{\emph{“Human rational behavior is shaped by a scissors whose two blades are the structure of the task environment and the computational capabilities of the actor.”}}{--- Herbert A. Simon}

\section{Introduction}

\todo[inline]{check citations; update keywords}

Most attempts to formalize morality in artificial systems proceed by encoding one or more established ethical theories—utilitarian, deontological, contractualist, or virtue-based—into algorithmic decision rules \cite{takeshita_towards_2023, hegde_ethics_2020, white_mapping_2024, preniqi_moralbert_2024}. This strategy treats moral disagreement as a problem of theory selection: the central task is to identify which moral principles an agent ought to implement \cite{gabriel_artificial_2020}. While this approach specifies what actions are morally preferred for some situations, it is often not generalizable because it leaves largely unexamined how moral reasoning itself is carried out, how it scales with cognitive capacity, and why distinct moral theories recur across cultures and historical periods.\todo{cite; also why is this relevant?}

A complementary line of work spanning bounded rationality, rational analysis, and resource-rational cognition has shown that human judgment reflects adaptive trade-offs under computational constraint \cite{simon_bounded_1990, anderson_adaptive_2013, griffiths_rational_2015}. Moral cognition has increasingly been analyzed through this lens \cite{levine_resource-rational_2024}, including within AI alignment research \cite{levine_resource_2025}. Related work on \emph{bounded ethicality} examines how moral behavior is systematically shaped or distorted by cognitive and situational constraints \cite{chugh_bounded_2005, tenbrunsel_13_2008}. Together, these traditions describe the limits of moral agents and the strategies they use to cope with those limits. What remains underspecified is the demand side of moral reasoning. That is, which features of moral situations impose the informational and computational requirements that burden those resources in the first place.\todo{expand on RRC and bounded ethicality, maybe later?}

\textbf{Our contribution is to characterize the demand structure of moral problems and the trade-offs these demands force on bounded agents.} We ask how moral situations vary in ways that systematically increase or decrease the computational burden of moral reasoning. Drawing on developmental, situational, and comparative evidence, we show that moral dilemmas differ along at least two orthogonal dimensions: how many entities, groups, or timescales are treated as morally relevant, and how much inference, deliberation, and information integration is required to evaluate them.\todo{do we indeed do this?} These dimensions jointly define the demand structure of moral reasoning and therefore determine when and how trade-offs become unavoidable for finite agents.

\begin{tcolorbox}[
  colback=red!3,
  colframe=red!60!black,
  title=\textbf{Overview},
  fonttitle=\bfseries,
  boxrule=0.6pt,
  arc=2pt,
  left=6pt,
  right=6pt,
  top=6pt,
  bottom=6pt
]
In this paper, we propose a resource-aware theory—\emph{Bounded Morality}—that characterizes moral reasoning as constrained inference under situational demands. The first dimension, \emph{moral breadth}, captures the scope and resolution of moral representation: which entities, groups, timescales, or abstractions are treated as morally relevant. The second dimension, \emph{moral depth}, captures inferential complexity: how richly, recursively, and coherently those represented entities and their interactions are reasoned about. Together, breadth and depth determine the informational and inferential demands that moral situations impose on finite agents.
\end{tcolorbox}

This framework is consistent with Herbert Simon’s notion of bounded rationality \citep{simon_bounded_1990}, but relocates the focus from agent limitations to problem structure. Rather than introducing new bounds on cognition, we specify the structure of moral problems that generates unavoidable trade-offs. We model morality at the computational level (à la \citet{marr_vision_2010}) to distinguish problem demands from solution strategies.

Framing morality in this way reveals a structural trade-off. Expanding moral scope increases representational dimensionality, while deepening moral inference compounds computational and data requirements. Because resources are finite, only a subset of this breadth–depth space is feasible. Consequently, ethical theories can be understood as locally efficient strategies adapted to different resource demands rather than as competing prescriptions. 

For AI alignment, this reframing has a direct implication: alignment inherits the same demand structure as human moral reasoning. If human morality reflects bounded cognition, directly imitating human judgments risks reproducing those constraints. A more promising direction is to design systems whose moral reasoning expands with representational and computational capacity, extending the developmental logic by which moral competence scales with resources. This does not involve copying human judgments, but instead replicating the scaling relationship between resources and moral capacity. Alignment should therefore focus on how moral reasoning capacity is allocated and expanded, rather than on learning human morality itself.

This paper does not argue for any particular moral values or ethical theory. It studies a different problem: how moral reasoning must operate when agents face situational demands that exceed their available time, information, and computational capacity. When we speak of moral progress or optimality, we mean improved performance relative to a fixed moral objective under fixed constraints, not convergence on a uniquely correct morality. The goal is to clarify the trade-offs that any finite moral agent—human or artificial—must face, regardless of which moral values one ultimately endorses.

The framework yields several consequences. First, it offers a unifying descriptive account of moral development: across individuals and societies, moral growth involves coordinated expansion along both dimensions as cognitive capacity, social information, and cultural scaffolding increase, allowing agents to meet higher moral demands. This pattern is supported by developmental and comparative evidence showing that moral reasoning expands through increasingly flexible trade-offs between representational scope and inferential sophistication rather than wholesale replacement of earlier modes.\todo{cite}

Additionally, the framework clarifies the persistence of moral disagreement: even when agents share underlying values and face the same situation, they may allocate limited moral resources differently in response to situational demands, occupying different points on the feasible breadth–depth frontier. Finally, it provides a precise sense in which moral progress can be defined. Holding resource budgets fixed, progress consists in achieving better approximations to higher-capacity moral reasoning—lower regret, better allocations, or more effective abstractions—within the same feasible frontier. Longer-term progress occurs when cultural, institutional, or technological changes alter the demand structure itself, shifting the frontier outward.\todo{turn this into a implications box}

\begin{tcolorbox}[
  colback=red!3,
  colframe=red!60!black,
  title=\textbf{Implications},
  fonttitle=\bfseries,
  boxrule=0.6pt,
  arc=2pt,
  left=6pt,
  right=6pt,
  top=6pt,
  bottom=6pt
]
\textbf{1. Persistent Disagreement Without Value Conflict.}
Agents who share the same underlying values can still disagree if they allocate limited moral resources differently—distinguishing more entities (breadth) or tracing consequences further (depth). Disagreement may therefore reflect different approximations to the same ideal standard.

\textbf{2. Strategy-Relative Moral Evaluation.}
Under constraints, moral reasoning is an approximation problem. Strategies are best compared by how much regret they incur relative to an unbounded ideal, given the same resource limits.

\textbf{3. Two Forms of Moral Progress.}
Progress can occur by (i) using fixed resources more effectively—reducing regret within the same feasible frontier—or (ii) expanding capacity through cultural, institutional, or technological change, shifting the frontier outward.
\end{tcolorbox}

The remainder of the paper proceeds as follows. Section~\ref{sec:framework} derives the breadth–depth space of moral computation from the representational and inferential demands imposed by moral situations. Section~\ref{sec:bounded} formalizes Bounded Morality as constrained inference, defining resource costs, the resulting Pareto frontier, and strategy-relative moral regret. Section~\ref{sec:ethical_strategies} reinterprets canonical ethical theories as resource-bounded strategies occupying distinct regions of this space. We conclude by discussing implications for moral disagreement, moral progress, and AI alignment.\todo{double check this sign posting; not sure about use of the word ``derive''}

\section{Deriving the Space of Moral Computation}
\label{sec:framework}

We derive the structure of moral computation by treating moral reasoning as a form of bounded inference performed by cognitively limited agents embedded in complex social worlds. Rather than positing abstract dimensions a priori, we identify recurrent constraints and dissociations that appear across moral psychology, developmental theory, neuroscience, and computational modeling. Together, these literatures converge on two largely independent axes along which moral reasoning varies: the \emph{scope of moral representation} and the \emph{depth of moral inference}.

\subsection{Moral Breadth: Scope of Representation}

The first axis concerns \emph{what is morally represented at all}. We refer to this dimension as \emph{moral breadth}. Moral breadth captures the scope of entities, interests, and temporal horizons that an agent includes within its moral model.

A large empirical literature supports the existence of systematic variation along this dimension. Developmental studies show that young children begin with predominantly egocentric moral concern and gradually expand their scope to include peers, social groups, and abstract principles \citep{piaget_origins_1952, kohlberg_moral_1976}. Prosocial development research documents a shift from self-oriented reasoning toward concern for others’ welfare, fairness, and justice \citep{eisenberg_prosocial_2006}. In adulthood, however, moral scope remains highly variable: individuals differ markedly in how far they extend moral concern beyond immediate in-groups.

This variation has been operationalized directly. \citet{crimston_moral_2016} introduce the Moral Expansiveness Scale, which quantifies how broadly individuals extend moral standing—to outgroups, non-human animals, ecosystems, and future generations. These differences predict real-world moral judgments and policy attitudes, providing evidence that moral breadth is a measurable and psychologically meaningful dimension.

Philosophical and cultural traditions further reinforce this axis. Singer's \citep{singer_expanding_1981} “expanding circle” frames moral progress as widening the set of beings whose welfare is taken into account. Environmental ethics and intergenerational justice emphasize the extension of moral concern across space and time \citep{parfit_reasons_1986, gardiner_perfect_2011}. Importantly, this expansion imposes informational costs: broader moral representations require tracking more stakeholders, preferences, and interactions, increasing representational and coordination demands.

Operationally, moral breadth can be manipulated or measured through scope variation tasks (e.g., changing the number or type of affected parties), through temporal framing (short-term versus long-term consequences), or through explicit inclusion and exclusion of entities (humans, non-human animals, or ecosystems). Such manipulations reliably affect moral judgments even when task structure and reasoning demands are held constant \citep{parfit_reasons_1986, waytz_who_2010, crimston_moral_2016}.

\subsection{Moral Depth: Inferential Complexity}

The second axis concerns \emph{how moral information is processed}. We refer to this dimension as \emph{moral depth}. Moral depth captures the complexity of the inferential transformations applied to a given moral representation: how many steps of reasoning are performed, how many perspectives are integrated, and whether agents engage in counterfactual, recursive, or principle-based deliberation.

Evidence for this dimension comes from several converging lines of research. Dual-process theories distinguish fast, intuitive moral responses from slower, deliberative reasoning \citep{haidt_emotional_2001, greene_neural_2004}. Importantly, these processes can operate over the same moral inputs but yield different outcomes, demonstrating that inferential depth varies independently of representational scope. Cognitive load and time pressure selectively impair deeper forms of moral reasoning while leaving surface judgments intact \citep{greene_cognitive_2008, conway_deontological_2013}.

Developmental psychology provides further support. \citet{selman_growth_1980} shows that children progress from egocentric reasoning to the ability to coordinate multiple perspectives—a capacity that emerges later than basic inclusion of others. Piaget showed that young children place greater emphasis on outcome but come to consider intention more as they mature \cite{piaget_moral_2013}. Recent work shows that adults under cognitive load behave more similarly to children, shifting from intent-based evaluation to outcome-based \cite{buon_non-mentalistic_2013, martin_effect_2021}. Kohlberg’s postconventional stages emphasize not broader concern per se, but the ability to reason about principles, conflicts, and justifications abstracted across cases \cite{kohlberg_moral_1976}. Neurocognitive evidence aligns with this view: deeper moral reasoning recruits executive control networks associated with working memory, abstraction, and conflict resolution \citep{greene_beyond_2014}.

Crucially, moral depth is dissociable from moral breadth. An agent may include many stakeholders yet rely on shallow rules (e.g., equal division), or focus narrowly on a single individual while engaging in rich counterfactual and perspective-based reasoning. This dissociation has been observed empirically: individuals with broad moral concern do not necessarily show greater deliberative sophistication, and vice versa \citep{crimston_moral_2016, kahane_beyond_2018, fetherstonhaugh_insensitivity_1997}.

Operationally, moral depth can be studied using tasks that ask people to explain their decisions step by step, to think about what would happen under different possible actions or from different viewpoints, or to choose between rules that come into conflict. Slower response times \cite{suter_time_2011, paxton_reflection_2012}, greater disruption under time pressure or mental load \cite{greene_cognitive_2008}, and more complex explanations \citep{baron_protected_1997} all point to deeper moral reasoning, which tends to require more time and mental effort.

\subsection{The Breadth--Depth Tradeoff}

While analytically distinct, moral breadth and moral depth are jointly constrained by limited cognitive resources. Expanding the scope of representation increases informational load; increasing inferential depth raises computational demands. Empirically, individuals can trade one for the other: broad moral scope is frequently accompanied by simplified reasoning and concern \cite{fetherstonhaugh_insensitivity_1997, dickert_scope_2015}, while deep deliberation is typically restricted to narrower contexts \cite{suter_time_2011, greene_cognitive_2008, paxton_reflection_2012, brandt_handbook_2016}.

This tradeoff is mirrored in formal results from social choice and computational complexity. Eliciting and coordinating multiple perspectives and preferences quickly becomes intractable as the number of stakeholders grows \cite{arrow_social_1970, brandt_handbook_2016, conitzer_communication_2005, conitzer_social_2024}. As a result, both humans and institutions rely on abstraction, heuristics, and procedural shortcuts to manage moral complexity at scale \cite{simon_bounded_1990, gigerenzer2011heuristics, arrow_social_1970, sunstein_social_1996}.

The feasible configurations of moral computation therefore form a constrained region in a two-dimensional space defined by breadth and depth. Different moral strategies—heuristic rules, principled deliberation, institutional norms—correspond to different allocations within this space. In the following sections, we formalize this tradeoff, characterize the resulting Pareto structure, and use it to reinterpret moral disagreement, ethical theories, and moral progress within a unified computational framework.

\begin{table}[t]
\centering
\small
\begin{tabular}{@{}>{\raggedright\arraybackslash}p{2.75cm}
                >{\raggedright\arraybackslash}p{2.50cm}
                p{8.cm}@{}}
\toprule
Role in Framework & Concept & Interpretation \\
\midrule
Overall framework 
& Bounded Morality 
& Morality understood as a computational problem solved approximately under finite cognitive, computational, and data constraints. \\

Core axis (representation) 
& Moral Breadth 
& How much of the moral world is explicitly represented: which entities, stakeholders, timescales, or abstractions are taken into account. \\

Core axis (reasoning) 
& Moral Depth 
& How extensively consequences and interactions are propagated and integrated through inference, counterfactual reasoning, or deliberation. \\

Fundamental constraint 
& Resource Budget 
& The finite capacity available for moral reasoning, including attention, memory, computation, time, and data. \\

Agent-level choice 
& Moral Strategy 
& A policy for allocating limited resources across breadth and depth when evaluating moral decisions. \\

Idealized reference 
& Unbounded Moral Agent 
& A hypothetical oracle that represents all morally relevant entities and reasons exhaustively over their interactions. \\

Performance measure 
& Moral Regret 
& The expected loss in moral evaluation incurred by using a bounded approximation instead of the unbounded ideal. \\

Structural tradeoff 
& Pareto Frontier 
& The set of non-dominated breadth--depth allocations achievable under a fixed resource budget. \\

Normative implication 
& Moral Progress 
& Improving moral performance within fixed resource constraints, by achieving lower moral regret or better approximations to higher-capacity reasoning along the feasible breadth--depth frontier. \\
\bottomrule
\end{tabular}
\caption{\textbf{Conceptual summary of the Bounded Morality framework.}}
\label{tab:bounded_morality_conceptual}
\end{table}

\section{Bounded Morality as Constrained Computation}
\label{sec:bounded}

This section develops a formal model of moral reasoning as a constrained computational problem. 
The central idea is simple: moral evaluation consists of assessing the long-run consequences of interventions in a structured system. 
When computational resources are limited, an agent must decide (i) how finely to represent the system and (ii) how far forward to propagate consequences. 
These two choices—\textit{representational breadth} and \textit{inferential depth}—define the core trade-off of bounded morality.

\subsection{The Moral System}

\subsubsection*{Moral Interaction Structure}

\begin{definition}[Moral Interaction Graph]
A \emph{moral interaction graph} is a finite graph
\[
G^\star=(V^\star,E^\star),
\]
whose nodes $v\in V^\star$ represent morally relevant entities and whose edges encode direct influence relationships. 
An edge $(u,v)\in E^\star$ indicates that the morally relevant state of $v$ may depend directly on the state or treatment of $u$.
\end{definition}

Each node $v$ has a local state space $\mathcal{S}_v$. 
The joint moral state is
\[
s=(s_v)_{v\in V^\star}\in\mathcal{S}:=\prod_{v\in V^\star}\mathcal{S}_v.
\]

Let $A$ be a finite action set. 
For $H\in\mathbb{N}$, define the set of action sequences
\[
\mathcal{A}_H := A^{H}.
\]
An element $\alpha=(a_0,\dots,a_{H-1})\in\mathcal{A}_H$ represents a temporally extended intervention.

\subsubsection*{Intervention-Driven Dynamics}

\begin{definition}[Moral Dynamics]
Given world state $\omega\in\Omega$ and action sequence $\alpha\in\mathcal{A}_H$, the moral state evolves as
\[
s(0)=I(a_0,\omega), 
\qquad
s(t+1)=F(s(t),a_t),
\]
where $F:\mathcal{S}\times A\to\mathcal{S}$ satisfies the locality condition
\[
s_v(t+1)=F_v\Big(s_v(t),\{s_u(t):(u,v)\in E^\star\},a_t\Big).
\]
\end{definition}

Thus interventions induce trajectories on a graph-structured dynamical system. 
Moral reasoning consists of evaluating these trajectories.

\subsubsection*{Ground-Truth Moral Value}

\begin{definition}[Graph-Structured Welfare]
An instantaneous welfare function $U:\mathcal{S}\to\mathbb{R}$ is \emph{graph-structured} if
\[
U(s)
=
\sum_{v\in V^\star}\psi_v(s_v)
+
\sum_{(u,v)\in E^\star}\psi_{uv}(s_u,s_v),
\]
for some node potentials $\{\psi_v\}_{v\in V^\star}$ 
and edge potentials $\{\psi_{uv}\}_{(u,v)\in E^\star}$.
\end{definition}

Fix a discount factor $\gamma\in(0,1)$.

\begin{definition}[Ground-Truth Moral Value]
For action sequence $\alpha\in\mathcal{A}_\infty$,
\[
M^\star(\alpha,\omega)
:=
\sum_{t=0}^{\infty}\gamma^t\,U(s(t)).
\]
\end{definition}

The unboundedly optimal intervention is
\[
\alpha^\star(\omega)
\in
\arg\max_{\alpha} M^\star(\alpha,\omega).
\]

This defines the full-information moral control problem.

\subsection{Bounded Representations: Breadth and Depth}

In practice, agents cannot evaluate $M^\star$ exactly. 
They restrict both how much of the system they explicitly represent and how far forward they propagate consequences.

\subsubsection*{Abstraction and Breadth}

\begin{definition}[Abstract Moral Representation]
An \emph{abstract representation} consists of a graph
\[
G=(V,E)
\]
together with a surjective aggregation map
\[
\pi_V:V^\star\to V,
\]
which partitions morally relevant entities into aggregated units.

The vertices $V$ represent equivalence classes under $\pi_V$. 
Edges are induced by coarse interaction:
\[
(u,v)\in E
\iff
\exists u'\in\pi_V^{-1}(u),\ 
\exists v'\in\pi_V^{-1}(v)
\ \text{s.t.}\ 
(u',v')\in E^\star.
\]
\end{definition}

Thus $G$ is the coarse-grained projection of the true moral interaction graph $G^\star$, obtained by aggregating entities and inheriting interactions between their aggregates.

\begin{definition}[Admissible Action Sequences Under Abstraction]
Given representation $G$, admissible sequences are
\[
\mathcal{A}_H(G)
:=
\Big\{\
\alpha\in\mathcal{A}_H:
a_{t,v'}=\tilde a_{t,\pi_V(v')}
\ \forall v'\in V^\star,\ \forall t\
\Big\}.
\]
\end{definition}

Thus entities aggregated together must be treated identically at every time step.

\begin{definition}[Breadth]
The \emph{breadth} of representation $G$ is
\[
b(G):=|V|+|E|.
\]
\end{definition}

Breadth measures how many distinctions and interactions are explicitly modeled (i.e., the representational resolution).

\subsubsection*{Depth as Truncated Propagation}

\begin{definition}[Depth]
The \emph{depth} $H\in\mathbb{N}$ is a rollout horizon specifying how many steps of the dynamics are explicitly evaluated.
\end{definition}

\begin{definition}[Finite-Horizon Approximation]
Given $(G,H)$, define the truncated objective
\[
\hat{M}_{G,H}(\alpha,\omega)
=
\sum_{t=0}^{H}\gamma^t\,U(s(t)),
\]
where $\alpha\in\mathcal{A}_H(G)$ and $s(t)$ evolves under the true dynamics.
\end{definition}

The approximation error reflects only the structural constraints imposed by aggregation ($G$) and horizon truncation ($H$), not uncertainty about the underlying moral primitives or dynamics.

\subsection{Informational and Inferential Costs}

We associate computational cost with representational complexity and the effort required to reason over that representation.

\begin{definition}[Total Computational Cost]
\[
\mathrm{Cost}(G,H)
=
\mathrm{Cost}_{\mathrm{info}}(G)
+
\mathrm{Cost}_{\mathrm{infer}}(G,H).
\]
\end{definition}

\begin{definition}[Informational Cost]
\[
\mathrm{Cost}_{\mathrm{info}}(G)
=
f\big(b(G)\big),
\]
where $f$ is strictly increasing in representational breadth $b(G)=|V|+|E|$.
\end{definition}

Informational cost captures the resources required to encode the abstract interaction structure and its associated welfare and dynamical primitives at the chosen level of resolution.

\begin{definition}[Inferential Cost]
\[
\mathrm{Cost}_{\mathrm{infer}}(G,H)
=
\Theta\big(H\, b(G)\big)
\cdot
C_{\mathrm{search}}(G,H),
\]
where $C_{\mathrm{search}}(G,H)\ge 1$ denotes the effective number of candidate trajectories that must be evaluated to optimize action selection.
\end{definition}

The factor $\Theta(H\,b(G))$ reflects the cost of simulating a single length-$H$ trajectory under abstraction $G$, while $C_{\mathrm{search}}(G,H)$ captures the combinatorial complexity of optimizing over admissible action sequences.

\subsection{Moral Strategies Under Resource Constraints}

Fix a computational budget $B>0$. 
Let $\mathsf{P}$ be a probability distribution over world states $\Omega$.

Let $\mathcal{G}$ denote the set of admissible abstract representations 
(as defined above), 
and let $\mathcal{A}_H(G)$ denote the admissible action sequences 
of length $H$ under representation $G$.

\subsubsection*{Resource Allocation (Strategy Level)}

\begin{definition}[Allocation Rule]
An \emph{allocation rule} is a mapping
\[
\rho:\Omega\times\mathbb{R}_+ \to \mathcal{G}\times\mathbb{N}_0,
\]
such that for each $(\omega,B)$,
\[
\rho(\omega,B)=(G,H),
\qquad
\mathrm{Cost}(G,H)\le B.
\]
\end{definition}

The allocation rule determines which entities are distinguished 
(construction of $V$), 
which interactions are modeled (construction of $E$), 
and how far consequences are propagated ($H$).

\subsubsection*{Planning Within a Representation (Policy Level)}

Given $(G,H)$ and world state $\omega$, define the induced policy
\[
\pi_{G,H}(\omega)
\in
\arg\max_{\alpha\in\mathcal{A}_H(G)}
\hat{M}_{G,H}(\alpha,\omega).
\]

\subsubsection*{Bounded Moral Strategy}

\begin{definition}[Bounded Moral Strategy]
A \emph{bounded moral strategy} is specified by an allocation rule $\rho$. 
It induces a bounded moral policy
\[
\pi^\rho_B(\omega)
=
\pi_{G,H}(\omega),
\qquad
\text{where } (G,H)=\rho(\omega,B).
\]
\end{definition}

\begin{tcolorbox}[
  colback=red!3,
  colframe=red!60!black,
  title=\textbf{Example: Content Moderation as Bounded Morality},
  fonttitle=\bfseries,
  boxrule=0.6pt,
  arc=2pt,
  left=6pt,
  right=6pt,
  top=6pt,
  bottom=6pt
]
A minimal instantiation of the framework (full details in Appendix~\ref{app:worked_example}).

\textbf{Ground Truth.}
Let $G^\star=(V^\star,E^\star)$ with 
$V^\star=\{E_1,E_2,C,M_1,M_2\}$ 
(extremists $E_i$, connector $C$, moderates $M_j$). 
Polarization diffuses over $G^\star$ under sanction-modified linear dynamics with slowly decaying resentment.

Instantaneous welfare penalizes polarization magnitude and disagreement:
$U(s)
=
-\frac{1}{5}\sum_v s_v^2
-
\frac{\lambda}{6}\sum_{\{u,v\}\in E^\star}(s_u-s_v)^2,$
and ground-truth value is
$M^\star(a,\omega)=\sum_{t=0}^{\infty}\gamma^t U(s(t)).$

\textbf{Competing Interventions.}
Sanction extremists only:
$a^{(E)}=\mathbf{1}_{\{E_1,E_2\}},$
or sanction extremists and the connector:
$a^{(EC)}=\mathbf{1}_{\{E_1,E_2,C\}}.$

\textbf{Depth Limitation.}
A bounded planner evaluates
$\hat M_{G^\star,H}(a,\omega)=\sum_{t=0}^{H}\gamma^t U(s(t)).$

\[
\begin{array}{c|cc}
H & a^{(E)} & a^{(EC)}\\
\hline
2  & -0.297 & \mathbf{-0.088} \\
10 & \mathbf{-0.581} & -0.715 \\
\infty & \mathbf{-0.634} & -0.955
\end{array}
\]

At $H=2$, the planner selects $a^{(EC)}$.
At $H=10$, it selects $a^{(E)}$.
Under the infinite-horizon objective, $a^{(E)}$ is optimal, so shallow depth incurs regret $\approx 0.322$.

\textbf{Breadth Limitation.}
If $E_1,E_2,C$ are aggregated into a single abstract node, admissible actions must treat them identically, eliminating $a^{(E)}$. Even with large depth, the planner cannot implement the optimal intervention, inducing the same regret magnitude.

\textbf{Interpretation.}
Limited depth hides delayed backlash effects.
Limited breadth removes selective interventions.
Moral progress corresponds to reallocating computational resources across depth and breadth to reduce expected regret.
\end{tcolorbox}

\subsection{Regret and Moral Progress}

\begin{definition}[State-Dependent Regret]
For world state $\omega$, the regret of bounded moral strategy $\rho$ 
under budget $B$ is
\[
R(\omega;\rho,B)
=
M^\star(\alpha^\star(\omega),\omega)
-
M^\star(\pi^\rho_B(\omega),\omega).
\]
\end{definition}

\begin{definition}[Expected Regret]
\[
\mathbb{E}_{\omega \sim \mathsf{P}}
\big[
R(\omega;\rho,B)
\big].
\]
\end{definition}

A bounded moral strategy $\rho$ is distributionally efficient under $(\mathsf{P},B)$ if no other feasible strategy performs at least as well in nearly all likely world states (under $\mathsf{P}$) and strictly better in some of them.

\begin{definition}[Moral Progress]
Fix a distribution $\mathsf{P}$ over $\Omega$ and a budget $B>0$.
Let $\rho_1$ and $\rho_2$ be feasible bounded moral strategies 
(i.e., $\mathrm{Cost}(G,H)\le B$ for all $(G,H)=\rho_i(\omega,B)$).

We say that $\rho_2$ exhibits \emph{moral progress relative to} $\rho_1$ 
under $(\mathsf{P},B)$ if
\[
\mathbb{E}_{\omega\sim\mathsf{P}}
\big[
R(\omega;\rho_2,B)
\big]
<
\mathbb{E}_{\omega\sim\mathsf{P}}
\big[
R(\omega;\rho_1,B)
\big].
\]

In general, progress may arise from improvements in:
\begin{enumerate}
\item resource allocation (selection of the abstraction $(G,H)$),
\item estimation (learning the welfare potentials $\psi$ and dynamics $F$),
\item optimization (maximization of $\hat{M}_{G,H}$ over $\mathcal{A}_H(G)$).
\end{enumerate}

In this paper, we isolate the first source and focus on progress driven by improved allocation rules $\rho$. We assume that the moral primitives and dynamics are known, and that optimization within a fixed representation $(G,H)$ is exact, so that regret arises solely from structural resource allocation.
\end{definition}


\begin{figure}[t]
\centering
\begin{tikzpicture}

\def\Bfixed{10}          
\def\omegaZero{0.40}     

\def\Eold{0.425}
\def\Enew{0.375}

\def\bMin{1}
\def\bMax{10000}

\def\bOld{600}
\def\bNew{120}

\newcommand{\dB}[1]{ln(\Bfixed - ln(#1))}

\begin{axis}[
    name=main,
    width=0.98\linewidth,
    height=0.58\linewidth,
    xmin=0, xmax=1,
    ymin=0.25, ymax=0.65,
    xlabel={World State $\omega$},
    ylabel={State-dependent Regret $R(\omega;\rho,B)$},
    title={Moral Progress via Improved Resource Allocation},
    tick label style={font=\small},
    label style={font=\small},
    title style={font=\small},
    grid=major,
    grid style={dotted, gray!25},
    axis line style={black!70},
    legend style={draw=none, font=\small, at={(0.02,0.98)}, anchor=north west},
    clip=true,
]

\addplot[
    very thick,
    black!70,
    domain=0:1,
    samples=500
]
{0.45 - 0.15*x + 0.10*sin(deg(3*x))};

\addplot[
    very thick,
    blue!75!black,
    domain=0:1,
    samples=500
]
{0.45 - 0.15*x + 0.10*sin(deg(3*x))
 - 0.08*exp(-((x-0.40)/0.15)^2)};

\addlegendentry{$\rho_{\mathrm{old}}$}
\addlegendentry{$\rho_{\mathrm{new}}$}

\addplot[densely dashed, black!60] coordinates {(0,\Eold) (1,\Eold)};
\addplot[densely dashed, blue!60]  coordinates {(0,\Enew) (1,\Enew)};

\node[font=\small, anchor=west, text=black!60]
  at (axis cs:0.06,\Eold-0.015)
  {$\mathbb{E}_{\omega}\left[R(\omega;\rho_{\mathrm{old}},B)\right]$};

\node[font=\small, anchor=west, text=blue!60]
  at (axis cs:0.06,\Enew-0.015)
  {$\mathbb{E}_{\omega}\left[R(\omega;\rho_{\mathrm{new}},B)\right]$};

\pgfmathsetmacro{\RoldAtOmega}{0.45 - 0.15*\omegaZero + 0.10*sin(deg(3*\omegaZero))}
\pgfmathsetmacro{\RnewAtOmega}{\RoldAtOmega - 0.08*exp(-((\omegaZero-0.40)/0.15)^2)}

\addplot[densely dashed, black!35] coordinates {(\omegaZero,0.25) (\omegaZero,0.65)};
\node[font=\small, anchor=south] 
  at (axis cs:\omegaZero-0.015,0.252) {$\omega_0$};

\addplot[only marks, mark=*, mark size=2.2pt, black!70]
  coordinates {(\omegaZero,\RoldAtOmega)};
\addplot[only marks, mark=*, mark size=2.2pt, blue!75!black]
  coordinates {(\omegaZero,\RnewAtOmega)};

\node[font=\small, anchor=south east, text=black!70]
  at (axis cs:\omegaZero-0.0,\RoldAtOmega+0.0)
  {$R(\omega_0;\rho_{\mathrm{old}},B)$};

\node[font=\small, anchor=north west, text=blue!75!black]
  at (axis cs:\omegaZero+0.0,\RnewAtOmega-0.0)
  {$R(\omega_0;\rho_{\mathrm{new}},B)$};

\coordinate (mainOldOmega) at (axis cs:\omegaZero,\RoldAtOmega);
\coordinate (mainNewOmega) at (axis cs:\omegaZero,\RnewAtOmega);

\end{axis}

\begin{axis}[
    name=inset,
    at={(main.north east)},
    anchor=north east,
    xshift=-5pt,
    yshift=-5pt,
    width=0.40\linewidth,
    height=0.30\linewidth,
    xmode=log,
    xmin=\bMin, xmax=\bMax,
    ymin=0, ymax=2.4,
    tick label style={font=\scriptsize},
    label style={font=\scriptsize},
    grid=none,
    xlabel={breadth},
    ylabel={depth},
    axis line style={black!60},
    clip=true,
]

\addplot[
    draw=black!55,
    line width=0.9pt,
    domain=\bMin:\bMax,
    samples=300
]
{\dB{x}};

\node[font=\scriptsize, anchor=west, text=black!55]
  at (axis cs:30,{ \dB{30} + 0.08 })
  {$B=\Bfixed$};

\node[font=\scriptsize, anchor=west, fill=white, inner sep=1pt]
  at (axis cs:1.4,0.25) {depth-heavy};
\node[font=\scriptsize, anchor=east, fill=white, inner sep=1pt]
  at (axis cs:9000,0.25) {breadth-heavy};

\pgfmathsetmacro{\dOld}{\dB{\bOld}}
\pgfmathsetmacro{\dNew}{\dB{\bNew}}

\addplot[only marks, mark=*, mark size=2.0pt, black!70]
  coordinates {(\bOld,\dOld)};
\addplot[only marks, mark=*, mark size=2.0pt, blue!75!black]
  coordinates {(\bNew,\dNew)};

\node[font=\scriptsize, anchor=east, align=right, text=black!70,
  xshift=10pt,     
  yshift=-18pt      
]
  at (axis cs:\bOld,\dOld)
  {$(G,H)=\rho_{\mathrm{old}}(\omega_0,B)$};

\node[font=\scriptsize, anchor=west, align=left, text=blue!75!black, 
  xshift=-71pt,
  yshift=-9pt
]
  at (axis cs:\bNew,\dNew)
  {$(G,H)=\rho_{\mathrm{new}}(\omega_0,B)$};

\coordinate (insetOldOmega) at (axis cs:\bOld,\dOld);
\coordinate (insetNewOmega) at (axis cs:\bNew,\dNew);

\end{axis}

\draw[->, thin, black!55]
  (insetOldOmega) .. controls +(0.15,0.08) and +(0.18,0.08) .. (mainOldOmega);

\draw[->, thin, blue!65]
  (insetNewOmega) .. controls +(0.15,0.02) and +(0.18,-0.08) .. (mainNewOmega);

\end{tikzpicture}

\caption{
At fixed budget $B$ and distribution $\mathsf{P}$, policies $\rho_{\mathrm{old}}$ and $\rho_{\mathrm{new}}$ map world states to structural allocations $(G,H)$. The plot highlights a representative state $\omega_0$ where the improved policy selects a different breadth--depth trade-off $(G,H)=\rho_{\mathrm{new}}(\omega_0,B)$ on the fixed-budget frontier, yielding lower state-dependent regret $R(\omega_0;\rho_{\mathrm{new}},B)$ than $R(\omega_0;\rho_{\mathrm{old}},B)$. Dashed lines indicate expected regret $\mathbb{E}_{\omega}[R(\omega;\rho,B)]$ (here shown for a uniform $\omega$).
}
\label{fig:moral_progress_allocation_mapping_omega0}
\end{figure}

\begin{remark}[Relation to Causality, Planning, and Control]
The formulation above places moral decision-making within the theory of controlled dynamical systems on graphs. Actions function as interventions that set initial conditions and influence local update rules, paralleling the use of do-operators in structural causal models. Moral evaluation then corresponds to planning in a dynamical system where value is assigned to entire trajectories induced by interventions.

In this interpretation, depth $H$ plays the role of a bounded planning horizon, while abstraction $(G,\pi_V)$ corresponds to state aggregation and parameter tying in approximate dynamic programming. The bounded morality problem can thus be viewed as a constrained optimal control problem in which representational resolution and rollout depth jointly restrict attainable performance.

Unlike standard control settings, however, the objective function is morally structured—it decomposes over individuals and their relationships. As a result, representational choices are ethically consequential. Coarse aggregation or truncated propagation can induce moral error even when the underlying welfare function is correct.
\end{remark}

\subsection{Asymptotic Scaling and a Canonical Cost Model} 
\label{sec:canonical_cost}

We now conjecture asymptotic bounds on informational and inferential costs, 
introduce a tractable canonical model consistent with these bounds, 
and derive the resulting breadth--depth trade-off under a fixed computational budget.

\subsubsection*{Asymptotic Bounds}

Let $b=b(G)=|V|+|E|$.

\begin{proposition}[Informational Lower Bound]
There exists $c_1>0$ such that
\[
\mathrm{Cost}_{\mathrm{info}}(G)
\ge
c_1\, b.
\]
\end{proposition}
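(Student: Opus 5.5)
The plan is to argue from what an encoding of an abstract representation must contain, not from the particular form of $f$. The definition of informational cost only says that $\mathrm{Cost}_{\mathrm{info}}(G)=f(b(G))$ with $f$ strictly increasing, and that alone does not give a linear bound: $f(b)=\log(1+b)$ is strictly increasing and sublinear. So the proposition has to be read as a structural claim about what the encoding must store. It is an asymptotic conjecture, as the subsection announces, and it needs the modelling assumption that each explicitly represented unit costs at least a constant amount of resource.

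First I will make that assumption explicit. Encoding $G$ together with its primitives means storing, for each $v\in V$, a node potential $\psi_v$, the local update $F_v$ and a state descriptor for $\mathcal{S}_v$. For each $(u,v)\in E$ it means storing an edge potential $\psi_{uv}$ and the fact that $u$ influences $v$ under $F_v$. The abstraction definition makes these objects pairwise distinct. Two distinct vertices correspond to disjoint classes of $\pi_V$, and two distinct edges to distinct coarse interactions, so in general no stored item can stand in for another. I posit a minimal unit cost $c_0>0$ (one memory cell or one primitive read) for each such item.

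Second, I will show that every one of the $b=|V|+|E|$ items must actually be stored. I will use a counting or adversary argument. Fix $G$ and suppose the encoding omitted the potential or update at some vertex $v$, or some edge $(u,v)$. Then I build two moral systems that agree on everything stored but differ in $\psi_v$ (respectively $\psi_{uv}$) by an arbitrary amount. Under the Graph-Structured Welfare decomposition, the truncated objective $\hat M_{G,H}$ differs between the two systems already at $t=0$. So the encoding would not determine $\hat M_{G,H}$, contradicting its role as the representation on which planning is performed. Hence at least one unit is needed per vertex and per edge, which gives $\mathrm{Cost}_{\mathrm{info}}(G)\ge c_0(|V|+|E|)$, and we take $c_1=c_0$. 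An information-theoretic version of the same step works too. If the potentials range over at least two admissible values per item, there are $2^{b}$ distinguishable configurations, so any encoding needs at least $b$ bits and $c_1=1$ in bit units.

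The main obstacle is this second step, the requirement that the items be independent. Compression could in principle share structure across vertices, for example when all $\psi_v$ are identical. I will handle this by taking the worst case over primitive assignments, so that the bound is a minimax lower bound. Equivalently, I will require that the class of admissible primitives contain, for each vertex and each edge, at least two values that can be varied independently. With that assumption, consistency with the definition follows by choosing $f$ with $f(b)\ge c_1 b$, which is precisely what the subsequent canonical cost model adopts.
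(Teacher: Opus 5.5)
Your proposal is correct and rests on the same idea as the paper's one-line proof, which simply asserts that any abstraction distinguishing $|V|$ aggregates and $|E|$ interactions must encode each of these structural elements. You go further in two ways. First, you make explicit the per-item unit-cost assumption, rightly noting that strict monotonicity of $f$ alone does not give a linear bound (e.g.\ $f(b)=\log(1+b)$). Second, you justify the ``must encode'' step with an adversary/counting argument, which the paper leaves implicit.
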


\begin{proof}
Any abstraction distinguishing $|V|$ aggregates and $|E|$ interactions 
must encode at least these structural elements.
\end{proof}

\begin{proposition}[Inferential Lower Bound]
There exists $c_2>0$ such that
\[
\mathrm{Cost}_{\mathrm{infer}}(G,H)
\ge
c_2\, H b.
\]
\end{proposition}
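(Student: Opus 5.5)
The bound follows directly from the definition of inferential cost, so I would argue from that definition together with the constraint $C_{\mathrm{search}}(G,H)\ge 1$. By definition,
\[
\mathrm{Cost}_{\mathrm{infer}}(G,H)=\Theta\big(H\,b(G)\big)\cdot C_{\mathrm{search}}(G,H).
\]

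The first step is to unpack the $\Theta$. Reading $\Theta(H\,b)$ as the cost of simulating a single length-$H$ rollout, it denotes a quantity $g(G,H)$ for which there exist constants $0<c_2\le C$ with
\[
c_2\,H b \le g(G,H) \le C\,H b
\]
for all admissible $(G,H)$. Only the lower half is needed here.

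The second step is to multiply by $C_{\mathrm{search}}(G,H)\ge 1$, which gives
\[
\mathrm{Cost}_{\mathrm{infer}}(G,H)\ge g(G,H)\ge c_2\,H b.
\]
For $H=0$ both sides are compatible trivially, since the right-hand side vanishes and costs are nonnegative.

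To make this more than a restatement of the definition, I would also justify the lower half of the $\Theta$ operationally, as the paper does for the informational bound. The argument is that explicitly evaluating $\hat M_{G,H}$ along even one admissible trajectory requires computing $s(t)$ for $t=1,\dots,H$. Under the locality condition, each step updates every aggregated node from its in-neighbours, which means touching each of the $|V|$ nodes and $|E|$ edges at least once, plus evaluating the graph-structured welfare $U$, which itself has $|V|+|E|$ terms. Each step therefore costs at least a constant times $b$, and $H$ steps cost at least $c_2 H b$.

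The main obstacle is precisely this operational claim. An agent could in principle exploit structure, for example closed-form linear dynamics or repeated squaring, to avoid per-step work. I would handle this by stating that the bound applies to explicit rollout evaluation, which is the computational model the definition of depth as truncated propagation assumes, rather than claiming it against all conceivable algorithms.
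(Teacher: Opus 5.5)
Your proposal is correct and takes the same approach as the paper. The paper's proof is your operational paragraph: simulating one length-$H$ trajectory costs $\Theta(b)$ work per step under locality, hence $\Theta(Hb)$ in total. Your definitional step (taking the lower half of the $\Theta$ and multiplying by $C_{\mathrm{search}}(G,H)\ge 1$) makes explicit why the search factor cannot push the cost below that of a single rollout, which the paper leaves implicit.
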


\begin{proof}
Simulating a single trajectory of length $H$ requires 
$\Theta(b)$ work per step under locality, 
yielding $\Theta(Hb)$ total effort.
\end{proof}

Evaluating a single trajectory scales linearly in breadth and depth. Planning, however, may require comparing exponentially many trajectories in the worst case. When interactions are sufficiently localized, this growth can instead remain polynomial in the horizon.
Thus
\[
\Omega(Hb)
\;\le\;
\mathrm{Cost}_{\mathrm{infer}}(G,H)
\;\le\;
O\big(Hb\,C_{\mathrm{search}}(G,H)\big),
\]
where $C_{\mathrm{search}}(G,H)\ge 1$ captures search complexity.

\subsubsection*{Canonical Cost Model}

To isolate the structural trade-off, 
we adopt a tractable model that respects the lower bounds and abstracts from worst-case search effects.

\begin{definition}[Canonical Cost Model]
For $b=b(G)$,
\[
\mathrm{Cost}(b,H)
=
\alpha b
+
\beta b H^{p},
\]
where $\alpha,\beta>0$ and $p\ge 1$.
\end{definition}

The first term models representational cost; 
the second models rollout effort growing linearly in breadth and polynomially in depth.

\subsubsection*{Budget Constraint and Feasible Region}

Fix budget $B>0$. 
Admissible pairs satisfy
\[
\alpha b + \beta b H^{p} \le B.
\]
For $b>0$,
\[
H^{p}
\le
\frac{B}{\beta b}
-
\frac{\alpha}{\beta}.
\]

Whenever $B>\alpha b$, this yields
\[
H
\le
\left(
\frac{B}{\beta b}
-
\frac{\alpha}{\beta}
\right)^{1/p}.
\]

For $B \gg \alpha b$,
\[
H_{\max}(b)
=
\Theta(b^{-1/p}).
\]

\subsubsection*{Breadth--Depth Trade-Off}

\begin{corollary}[Inverse Scaling Law]
Under the canonical cost model with fixed $B$ and $p\ge 1$, 
there exists $C>0$ such that
\[
H_{\max}(b)
\le
C\, b^{-1/p}.
\]
\end{corollary}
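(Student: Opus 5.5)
The plan is to read off the bound directly from the budget constraint of the canonical cost model, dropping the representational term rather than approximating it. For any admissible pair $(b,H)$ with $b>0$ we have $\alpha b + \beta b H^{p} \le B$. Since $\alpha b \ge 0$, this implies $\beta b H^{p} \le B$, so
\[
H^{p} \le \frac{B}{\beta b}.
\]
Because $p \ge 1$ and $H \ge 0$, the map $x \mapsto x^{1/p}$ is monotone on $[0,\infty)$. Taking $p$-th roots therefore preserves the inequality and gives
\[
H \le \left(\frac{B}{\beta}\right)^{1/p} b^{-1/p}.
\]
Defining $H_{\max}(b)$ as the supremum of admissible depths at breadth $b$, the same bound holds for $H_{\max}(b)$. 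We may then take
\[
C := \left(\frac{B}{\beta}\right)^{1/p} > 0,
\]
which is finite and depends only on the fixed quantities $B$, $\beta$, and $p$, not on $b$.

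Two bookkeeping points need care. First, when $B \le \alpha b$, the earlier derivation shows that no $H>0$ is admissible; at most $H=0$ is feasible, and only if $B\ge \alpha b$. In that case $H_{\max}(b)=0$ (or the feasible set is empty, with the convention $H_{\max}=0$), and the bound holds trivially. Second, the inequality should be stated for $b \ge 1$, or at least $b>0$, since $b=|V|+|E|\ge 1$ for any nonempty abstraction. This avoids the degenerate case $b=0$, where $b^{-1/p}$ is undefined.

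I expect no genuine obstacle here. The one subtle step is recognising that the statement asks only for an upper bound. The heuristic $\Theta(b^{-1/p})$, which requires $B \gg \alpha b$, is not needed. Discarding the $-\alpha/\beta$ term costs nothing for the upper bound, so the corollary holds uniformly in $b$, with no asymptotic regime assumption.

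If one also wanted the matching lower bound implicit in the preceding $\Theta$ claim, I would restrict to $b \le B/(2\alpha)$. On that range $\frac{B}{\beta b}-\frac{\alpha}{\beta} \ge \frac{B}{2\beta b}$, which gives $H_{\max}(b) \ge (B/2\beta)^{1/p} b^{-1/p}$, up to integrality of $H$. That lower bound is not part of the statement.
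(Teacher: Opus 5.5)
Your proof is correct and takes the same route as the paper's: isolate $H^{p}$ in the budget constraint $\alpha b+\beta bH^{p}\le B$, bound the right-hand side by a constant times $b^{-1}$, and take $p$-th roots. Your version is slightly cleaner, because discarding the $-\alpha/\beta$ term gives the explicit constant $C=(B/\beta)^{1/p}$ for all $b>0$, whereas the paper restricts, unnecessarily, to $b$ ``sufficiently small relative to $B/\alpha$'' and leaves its constant $C'$ implicit.
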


\begin{proof}
From
\(
\alpha b + \beta b H^{p} \le B
\),
we obtain
\(
H^{p} \le \frac{B}{\beta b}-\frac{\alpha}{\beta}.
\)
For $b$ sufficiently small relative to $B/\alpha$, 
the right-hand side is bounded above by $C' b^{-1}$ 
for some $C'>0$. 
Taking $p$-th roots gives
\(
H \le C b^{-1/p}.
\)
\end{proof}

This corollary formalizes the structural tension at the heart of bounded morality: 
increasing representational resolution strictly reduces the horizon over which consequences can be propagated under fixed computational resources. 
The feasible region in $(b,H)$ space is therefore downward-sloping, and any allocation of finite resources must lie on or below this breadth--depth frontier.

\section{Ethical Theories as Resource-Bounded Moral Strategies}
\label{sec:ethical_strategies}

\epigraph{\emph{“We do not grow absolutely, chronologically. We grow sometimes in one dimension, and not in another; unevenly. We are relative. We are mature in one realm, childish in another.”}}{--- Anaïs Nin}

Section~\ref{sec:bounded} formalized moral reasoning as a constrained computational problem: under a budget $B$, an allocation rule $\rho$ selects an abstract representation $(G,\pi_V)$ and a rollout depth $H$, after which the agent optimizes a truncated objective over admissible intervention sequences $\alpha \in \mathcal{A}_H(G)$. Regret arises when structural restrictions on breadth and depth prevent recovery of the ground-truth optimum.

In this section, we reinterpret several canonical ethical theories as \emph{families of bounded moral strategies}. Each theory is modeled as a structured restriction on:
\begin{enumerate}
\item admissible allocation rules $\rho$ (hence on feasible $(G,H)$ pairs),
\item the aggregation functional used to evaluate trajectories,
\item and, where relevant, additional admissibility constraints on interventions.
\end{enumerate}
The underlying moral primitives $(G^\star,F,U)$ remain fixed. Differences between theories are represented as differences in how computational resources are allocated and how trajectory evaluations are constructed under constraint.

The mappings below are deliberately schematic. They isolate dominant structural commitments—breadth, depth, aggregation, and admissibility—rather than attempting to reconstruct full philosophical doctrines.

\subsection{Strategy Families}

Let $(G,H)=\rho(\omega,B)$ and let $s(0{:}H)$ denote the trajectory induced by $\alpha\in\mathcal{A}_H(G)$ under the true dynamics. A strategy family is characterized by an evaluation functional
\[
\mathcal{T}\Big(\{s(t)\}_{t=0}^{H};G\Big),
\]
and (optionally) a restricted admissible class $\mathcal{A}^{\mathrm{adm}}_H(G)\subseteq\mathcal{A}_H(G)$. The induced policy is
\[
\pi^{\rho,\mathcal{T}}_B(\omega)
\in
\arg\max_{\alpha\in\mathcal{A}^{\mathrm{adm}}_H(G)}
\mathcal{T}\Big(\{s(t)\}_{t=0}^{H};G\Big).
\]

Under the baseline model of Section~\ref{sec:bounded}, $\mathcal{T}$ coincides with the truncated discounted sum $\hat{M}_{G,H}$ and $\mathcal{A}^{\mathrm{adm}}_H(G)=\mathcal{A}_H(G)$. Ethical theories can be represented as structured deviations from this baseline.

\subsection{Act Utilitarianism}

Act utilitarianism evaluates interventions by maximizing total welfare across all affected individuals \citep{mill2016utilitarianism}. In the present formalism, its defining commitment is additive aggregation over a broad representational scope.

\paragraph{Aggregation.}
The evaluation functional coincides with the truncated ground-truth objective:
\[
\mathcal{T}_{\mathrm{UTIL}}\Big(\{s(t)\}_{t=0}^{H};G\Big)
=
\sum_{t=0}^{H}\gamma^t U(s(t)).
\]

\paragraph{Allocation structure.}
A utilitarian strategy family consists of allocation rules $\rho$ that prioritize representational breadth:
\[
\mathcal{R}_{\mathrm{UTIL}}
=
\Big\{
\rho:\ \rho(\omega,B)=(G,H)
\ \text{with}\ 
b(G)\ \text{maximized under }B,
\ H\ \text{moderate}
\Big\}.
\]
Depth is limited by budget but not minimized; breadth is expanded to include as many morally relevant entities and interactions as feasible.

\paragraph{Computational interpretation.}
Computation is devoted primarily to representing many stakeholders. Aggregation remains structurally simple. This strategy is locally efficient when interaction effects are weak or approximately decomposable, but becomes expensive when deep dependencies dominate.

\begin{table}[t]
\centering
\setlength{\tabcolsep}{3pt}
\begin{tabular}{@{}lcccc@{}}
\toprule
Theory
& Breadth
& Depth
& Structural Commitment
& Typical Regime \\
\midrule
Act Utilitarianism
& High
& Mod.
& Additive sum $\sum_{t}\gamma^t U(s(t))$
& Large, weakly coupled systems \\

Deontology
& Mod.
& Mod.
& Hard constraints on $\mathcal{A}_H(G)$
& Tight rules; limited scope \\

Contractualism
& Low
& High
& Worst-case $\min_v \sum_{t}\gamma^t U_v(s(t))$
& Few parties; deep objections \\

Virtue Ethics
& Very Low
& $0$
& Amortized policy $\pi_\theta$
& Real-time action \\

Care Ethics
& Local
& Mod.
& Relational (edge-weighted) terms
& Dense local dependence \\

\bottomrule
\end{tabular}

\caption{
\textbf{Ethical theories as caricatured families of bounded moral strategies.}
Each theory occupies a characteristic region of the breadth--depth frontier and imposes a distinct structural commitment on evaluation or admissibility.
}
\label{tab:ethical_strategies}
\end{table}

\subsection{Deontological Rule-Based Ethics}

Deontological theories evaluate interventions by compliance with rules or duties rather than by outcome maximization \citep{kant2020groundwork}. In the present framework, this corresponds primarily to restricting admissible interventions.

\paragraph{Admissibility and search reduction.}
Let $\mathcal{C}$ denote a set of rule constraints. Define
\[
\mathcal{A}^{\mathrm{adm}}_{H,\mathcal{C}}(G)
=
\{\alpha\in\mathcal{A}_H(G): \alpha \text{ satisfies } \mathcal{C}\}.
\]
The evaluation functional imposes a hard barrier:
\[
\mathcal{T}_{\mathrm{RULE}}\Big(\{s(t)\};G\Big)
=
\begin{cases}
\sum_{t=0}^{H}\gamma^t U(s(t)), & \alpha\in\mathcal{A}^{\mathrm{adm}}_{H,\mathcal{C}}(G),\\
-\infty, & \text{otherwise.}
\end{cases}
\]

The constraints reduce the effective search space from $\mathcal{A}_H(G)$ to the feasible subset $\mathcal{A}^{\mathrm{adm}}_{H,\mathcal{C}}(G)$, potentially yielding large savings in the search factor $C_{\mathrm{search}}(G,H)$. 

However, feasibility typically decreases as breadth and depth increase. Larger $b(G)$ introduces more entities and constraint instances; larger $H$ requires constraints to hold over longer trajectories. In general,
\[
\mathcal{A}^{\mathrm{adm}}_{H,\mathcal{C}}(G)
=
\bigcap_{t=0}^{H}
\{\alpha:\text{constraints hold at time }t\},
\]
so the admissible set shrinks with either $b(G)$ or $H$. In extreme cases it may be empty, corresponding to moral infeasibility under the chosen scope and horizon.

\paragraph{Allocation structure.}
A canonical deontological family therefore restricts both breadth and depth:
\[
\mathcal{R}_{\mathrm{RULE}}
=
\Big\{
\rho:\ \rho(\omega,B)=(G,H)
\ \text{with}\ 
b(G) \text{ and } H\ \text{medium-to-small}
\Big\}.
\]

\paragraph{Computational interpretation.}
Deontological strategies gain efficiency by pruning the search space rather than simplifying aggregation. Their tractability depends on maintaining medium breadth and depth; as scope or horizon expands, constraint-checking costs grow and the feasible set can collapse.

\subsection{Contractualism}

Contractualist theories evaluate interventions by whether they can be justified to each affected individual, often modeled via the strongest reasonable objection \citep{scanlon2000we}. The defining structural feature is worst-case aggregation over represented individuals.

\paragraph{Aggregation.}
Let $U_v(s)$ denote the portion of instantaneous welfare attributed to represented entity $v\in V$ under abstraction $G$. Define
\[
\mathcal{T}_{\mathrm{CONTRACT}}\Big(\{s(t)\}_{t=0}^{H};G\Big)
=
\min_{v\in V}
\sum_{t=0}^{H}\gamma^t U_v(s(t)).
\]

\paragraph{Allocation structure.}
Contractualist strategy families restrict breadth while increasing depth:
\[
\mathcal{R}_{\mathrm{CONTRACT}}
=
\Big\{
\rho:\ \rho(\omega,B)=(G,H)
\ \text{with}\ 
b(G)\ \text{small},\ 
H\ \text{large under }B
\Big\}.
\]

\paragraph{Computational interpretation.}
Attention is concentrated on a limited set of stakeholders, but substantial depth is allocated to propagate indirect consequences and objections. This strategy is viable in small-scale, high-stakes settings with sufficient deliberative resources.

\subsection{Virtue Ethics}

Virtue ethics emphasizes stable dispositions rather than explicit outcome calculation \citep{gottlieb2015aristotle}. In the present model, this corresponds to amortized inference.

\paragraph{Policy form.}
Set $H=0$ and define a learned policy
\[
\pi_\theta:\Omega\to A,
\]
where parameters $\theta$ encode dispositions acquired through experience.

\paragraph{Allocation structure.}
The associated strategy family satisfies
\[
\mathcal{R}_{\mathrm{VIRTUE}}
=
\Big\{
\rho:\ \rho(\omega,B)=(G,0)
\ \text{with}\ 
b(G)\ \text{minimal}
\Big\},
\qquad
\pi^\rho_B(\omega)=\pi_\theta(\omega).
\]

\paragraph{Computational interpretation.}
Online rollout is eliminated. Computation is shifted offline into learning. This strategy is efficient under extreme time constraints but sensitive to context.

\subsection{Care Ethics}

Care ethics emphasizes concrete relationships and contextual interdependence \citep{gilligan_different_1993}. Its defining feature in the present framework is localized representation of dense relational structure.

\paragraph{Representation.}
Let $v_0$ be a focal agent and let $\mathcal{N}_r(v_0)$ denote a radius-$r$ neighborhood in $G^\star$. Define
\[
G \approx G^\star[\mathcal{N}_r(v_0)].
\]

\paragraph{Aggregation.}
Evaluation emphasizes relational (edge) terms:
\[
\mathcal{T}_{\mathrm{CARE}}\Big(\{s(t)\}_{t=0}^{H};G\Big)
=
\sum_{t=0}^{H}\gamma^t
\sum_{(u,v)\in E}
\psi_{uv}(s_u(t),s_v(t)).
\]

\paragraph{Allocation structure.}
\[
\mathcal{R}_{\mathrm{CARE}}
=
\Big\{
\rho:\ \rho(\omega,B)=(G,H)
\ \text{with}\ 
G\ \text{local around }v_0,\ 
H\ \text{moderate}
\Big\}.
\]

\paragraph{Computational interpretation.}
Representational capacity is devoted to dense local interactions rather than global enumeration. Depth is sufficient to integrate relational dependencies within the local subgraph but does not scale globally.

\subsection{Ethical Disagreement as Allocation Disagreement}

On this formalization, ethical disagreement can arise from both disagreement about welfare primitives but also from disagreement about resource allocation. Distinct theories correspond to different structured restrictions on $(G,H)$, different aggregation functionals $\mathcal{T}$, and different admissibility constraints. Under fixed budget $B$ and distribution $\mathsf{P}$, these structural commitments induce different regret profiles.

Ethical theories can thus be interpreted as locally efficient regions of the breadth--depth frontier (Figure \ref{fig:ethical_strategies}). Each represents a characteristic solution to the problem of allocating limited computational resources across representational scope and inferential depth in morally structured dynamical systems.

\begin{figure}[t]
\centering
\begin{tikzpicture}
\begin{axis}[
    width=0.98\linewidth,
    height=0.58\linewidth,
    xmode=log,
    xmin=1, xmax=5.5e4,
    ymin=0, ymax=2.5,
    xlabel={Breadth $b$},
    ylabel={Depth $H$},
    title={Ethical Theories as Moral Strategy Families in Breadth--Depth Space},
    tick label style={font=\small},
    label style={font=\small},
    title style={font=\small},
    grid=major,
    grid style={dotted, line width=0.35pt, gray!25},
    axis line style={black!70},
    clip=true,
    axis on top,
]

\definecolor{contour}{RGB}{95,115,150}   
\definecolor{util}{RGB}{185,120,65}      
\definecolor{con}{RGB}{120,95,160}       
\definecolor{care}{RGB}{55,125,120}      
\definecolor{deon}{RGB}{145,65,65}       
\definecolor{virt}{RGB}{135,65,105}      

\def\BOne{10}
\def\BTwo{12}
\def\BThr{14}

\newcommand{\dBTwo}[1]{ln(\BTwo - ln(#1))}

\def\conL{1}
\def\conR{10}
\addplot[draw=none, fill=con, fill opacity=0.14]
  coordinates {(1,0) (\conL,{\dBTwo{\conL}})}
  -- plot[domain=\conL:\conR, samples=120] (\x,{ln(\BTwo - ln(\x))})
  -- (\conR,{\dBTwo{\conR}})
  -- (1,0) -- cycle;

\def\careL{15}
\def\careR{150}
\addplot[draw=none, fill=care, fill opacity=0.14]
  coordinates {(1,0) (\careL,{\dBTwo{\careL}})}
  -- plot[domain=\careL:\careR, samples=140] (\x,{ln(\BTwo - ln(\x))})
  -- (\careR,{\dBTwo{\careR}})
  -- (1,0) -- cycle;

\def\deonL{200}
\def\deonR{1000}

\addplot[draw=none, fill=deon, fill opacity=0.14]
  coordinates {(1,0) (\deonL,{\dBTwo{\deonL}})}
  -- plot[domain=\deonL:\deonR, samples=140] (\x,{ln(\BTwo - ln(\x))})
  -- (\deonR,{\dBTwo{\deonR}})
  -- (1,0) -- cycle;

\def\xMax{55000}
\def\utilL{2000}

\path[fill=util, fill opacity=0.14, draw=none]
  (axis cs:1,0)
  -- (axis cs:\utilL,{ln(\BTwo - ln(\utilL))})

  -- (axis cs:1200,{ln(\BTwo - ln(1200))})
  -- (axis cs:2000,{ln(\BTwo - ln(2000))})
  -- (axis cs:4000,{ln(\BTwo - ln(4000))})
  -- (axis cs:8000,{ln(\BTwo - ln(8000))})
  -- (axis cs:16000,{ln(\BTwo - ln(16000))})
  -- (axis cs:30000,{ln(\BTwo - ln(30000))})
  -- (axis cs:\xMax,{ln(\BTwo - ln(\xMax))})

  -- (axis cs:\xMax,0)   
  -- cycle;

\addplot[
  draw=contour!40,
  dashed,
  line width=0.6pt,
  domain=1:5.5e4,
  samples=260
] {ln(\BOne - ln(x))};

\addplot[
  draw=contour!70,
  line width=1.2pt,
  domain=1:5.5e4,
  samples=320
] {ln(\BTwo - ln(x))};

\addplot[
  draw=contour!50,
  dashed,
  line width=0.8pt,
  domain=1:5.5e4,
  samples=260
] {ln(\BThr - ln(x))};

\node[font=\small, text=contour!45, rotate=-39, anchor=west]
  at (axis cs:900, {ln(\BOne - ln(675))}) {$B=5$};
\node[font=\small, text=contour!70, rotate=-27, anchor=west]
  at (axis cs:1200, {ln(\BTwo - ln(800))}) {$B=10$};
\node[font=\small, text=contour!50, rotate=-20, anchor=west]
  at (axis cs:1300, {ln(\BThr - ln(800))}) {$B=15$};

\tikzset{
    theorylabel/.style={
    font=\small,
    inner sep=1.5pt,
    fill=white,
    fill opacity=0.85,
    text opacity=1,
    rounded corners=1pt
  }
}

\node[theorylabel, text=con!95!black, anchor=west]
  at (axis cs:1.1,2.) {Contractualism};

\node[theorylabel, text=care!95!black, anchor=west]
  at (axis cs:10,1.7) {Care Ethics};

\node[theorylabel, text=util!95!black, anchor=west]
  at (axis cs:300,0.60) {Utilitarianism};

\node[theorylabel, text=deon!95!black, anchor=west]
  at (axis cs:65,1.4) {Deontology};

\node[theorylabel, text=virt!95!black, anchor=west]
  at (axis cs:2.,0.10) {Virtue Ethics};


\addplot[only marks, mark=*, mark size=2.6pt, draw=virt!95!black, fill=virt]
  coordinates {(1.5,0.08)};
\addplot[only marks, mark=o, mark size=5.0pt, draw=virt, very thick, opacity=0.18]
  coordinates {(1.5,0.08)};

\end{axis}
\end{tikzpicture}
\caption{
Budget contours ($B$) induce Pareto-efficient breadth--depth frontiers under a canonical cost model.
Ethical theories are depicted as strategy families with characteristic scaling tendencies: utilitarianism trades budget for breadth, contractualism for depth, care ethics for local breadth and moderate depth, while deontology and virtue ethics remain near low-depth regimes.
Regions are conceptual caricatures.
}
\label{fig:ethical_strategies}
\end{figure}

\section{Discussion}

This paper advances a limited but we believe, clarifying perspective on moral reasoning: even if moral truth is well defined, reasoning about it is a constrained computational problem. By making representational scope, inferential depth, and computational costs explicit, the framework highlights structural tradeoffs that any finite moral agent—human or artificial—must confront.

A first implication is a reframing of moral failure. Many familiar shortcomings in moral judgment—oversimplification, rigidity, or neglect of distant stakeholders—need not be attributed to defective values or irrationality. Instead, they can arise from rational allocations of limited resources across representation and inference. The breadth--depth tradeoff implies that attending to more entities necessarily constrains how deeply their interactions can be reasoned about. This does not excuse moral failings, but it shifts explanatory weight toward underlying computational constraints rather than attributing outcomes solely to character or principle.

Second, the framework offers a non-relativist account of persistent moral disagreement. Agents who share the same underlying moral objective may nonetheless reach different conclusions because they operate at different feasible points in the breadth--depth space or face different resource constraints. Disagreement, on this view, can reflect differences in approximation rather than differences in moral values. Persistent moral disagreement therefore need not imply moral pluralism, but may instead reflect stable variation in how agents approximate a common moral objective under constraint.

Third, the results offer a reinterpretation of ethical theories. Rather than treating utilitarianism, contractualism, or care ethics as competing foundational accounts, the framework treats them as families of reasoning strategies that are locally efficient under different constraints. Broad but shallow aggregation, narrow but deep reasoning, and heuristic rule-following each occupy different regions of the feasible space. This perspective does not adjudicate between theories, but it helps explain why different approaches appear compelling in different contexts.

The framework also introduces a new notion of moral progress. Progress need not consist in expanding the scope of moral concern, discovering new values, or converging on a single moral doctrine. Instead, progress can occur through improved abstractions, more efficient inference procedures, better data, or better allocation of moral reasoning effort across contexts—each of which reduces regret under fixed resources. This reframes historical and institutional moral change as, in part, advances in how moral reasoning is computed.

Finally, the implications for artificial systems are primarily diagnostic rather than prescriptive. Moral competence should not be reduced to a single objective score, but instead evaluated relative to resource budgets and deployment contexts along dimensions of scope, depth, and latency. Many alignment failures may reflect mismatches between moral computation and context rather than incorrect objectives. The framework provides a technical vocabulary for analyzing where and why morally competent behavior breaks down under constraint.

Overall, Bounded Morality does not aim to compete with moral philosophy or moral psychology. Its contribution is narrower: to isolate a structural feature that any account of moral reasoning must contend with when instantiated in finite agents, and to show how this feature shapes disagreement, heuristics, and the limits of moral deliberation.

\section{Conclusion}

This paper introduced \emph{Bounded Morality}, a framework that treats moral reasoning as constrained inference over structured moral worlds. By explicitly modeling representational scope, inferential depth, and computational cost, we showed that finite agents face an unavoidable tradeoff between moral breadth and moral depth. This tradeoff constrains which moral strategies are achievable under fixed resources, making explicit the limits of moral reasoning for finite agents. Conceptually, the framework reframes moral disagreement, heuristics, and ethical theories as predictable consequences of bounded computation rather than as failures of moral concern or coherence. For artificial systems, it highlights that alignment is not only a matter of specifying correct values, but also of designing moral computation that is well matched to capacity and context. More broadly, Bounded Morality provides a minimal computational lens through which insights from ethics, psychology, and AI can be integrated and evaluated under realistic constraints.

\begin{acknowledgments}
    This work was supported by the Amaranth Foundation. The authors thank Jared Moore and David Gottlieb for valuable discussions, including insights drawn from their instruction in Stanford’s \href{https://web.stanford.edu/class/cs186/}{CS186: How to Make a Moral Agent} course, and Nicholas Christakis for helpful conversations.
\end{acknowledgments}

\section*{Declaration on Generative AI}
 During the preparation of this work, the authors used GPT-5 in order to: Improve writing style, Content enhancement. After using this tool, the authors reviewed and edited the content as needed and take full responsibility for the publication’s content.

\bibliography{references}

@incollection{gottlieb2015aristotle,
  title={Aristotle: nicomachean ethics},
  author={Gottlieb, Paula},
  booktitle={Central Works of Philosophy v1},
  pages={46--68},
  year={2015},
  publisher={Routledge}
}

@book{scanlon2000we,
  title={What we owe to each other},
  author={Scanlon, Thomas M},
  year={2000},
  publisher={Belknap Press}
}

@incollection{kant2020groundwork,
  title={Groundwork of the Metaphysic of Morals},
  author={Kant, Immanuel},
  booktitle={Immanuel Kant},
  pages={17--98},
  year={2020},
  publisher={Routledge}
}

@incollection{mill2016utilitarianism,
  title={Utilitarianism},
  author={Mill, John Stuart},
  booktitle={Seven masterpieces of philosophy},
  pages={329--375},
  year={2016},
  publisher={Routledge}
}

@book{gigerenzer2011heuristics,
  title={Heuristics: The foundations of adaptive behavior.},
  author={Gigerenzer, Gerd Ed and Hertwig, Ralph Ed and Pachur, Thorsten Ed},
  year={2011},
  publisher={Oxford university press}
}

@book{selman_growth_1980,
    title = {The {Growth} of {Interpersonal} {Understanding}: {Developmental} and {Clinical} {Analyses}},
    isbn = {978-0-12-636450-7},
    shorttitle = {The {Growth} of {Interpersonal} {Understanding}},
    language = {en},
    publisher = {Academic Press},
    author = {Selman, Robert L.},
    year = {1980},
}

@article{haidt_emotional_2001,
    title = {The emotional dog and its rational tail: {A} social intuitionist approach to moral judgment},
    volume = {108},
    issn = {1939-1471},
    shorttitle = {The emotional dog and its rational tail},
    doi = {10.1037/0033-295X.108.4.814},
    number = {4},
    journal = {Psychological Review},
    author = {Haidt, Jonathan},
    year = {2001},
    note = {Place: US
Publisher: American Psychological Association},
    pages = {814--834},
}

@book{arrow_social_1970,
    series = {Cowles {Foundation} {Monograph} {Series}},
    title = {Social {Choice} and {Individual} {Values}},
    isbn = {978-0-300-01363-4},
    url = {https://books.google.com/books?id=uebtAAAAMAAJ},
    publisher = {Yale University Press},
    author = {Arrow, K.J.},
    year = {1970},
}

@book{brandt_handbook_2016,
    address = {Cambridge ; New York},
    title = {Handbook of computational social choice},
    isbn = {978-1-107-06043-2},
    publisher = {Cambridge University Press},
    editor = {Brandt, Felix},
    year = {2016},
}

@book{singer_expanding_1981,
    title = {The expanding circle},
    isbn = {0-19-824646-3},
    publisher = {Clarendon Press Oxford},
    author = {Singer, Peter},
    year = {1981},
}

@article{kohlberg_moral_1976,
    title = {Moral stages and moralization: {The} cognitive-development approach},
    journal = {Moral development and behavior: Theory research and social issues},
    author = {Kohlberg, Lawrence},
    year = {1976},
    note = {Publisher: Rinehart \& Winston},
    pages = {31--53},
}

@book{piaget_moral_2013,
    title = {The moral judgment of the child},
    isbn = {1-315-00968-4},
    publisher = {Routledge},
    author = {Piaget, Jean},
    year = {2013},
}

@incollection{eisenberg_prosocial_2006,
    address = {Hoboken, NJ, US},
    title = {Prosocial {Development}},
    isbn = {978-0-471-27290-8},
    booktitle = {Handbook of child psychology: {Social}, emotional, and personality development, {Vol}. 3, 6th ed},
    publisher = {John Wiley \& Sons, Inc.},
    author = {Eisenberg, Nancy and Fabes, Richard A. and Spinrad, Tracy L.},
    year = {2006},
    pages = {646--718},
}

@book{gilligan_different_1993,
    title = {In a different voice: {Psychological} theory and women’s development},
    isbn = {0-674-44544-9},
    publisher = {Harvard university press},
    author = {Gilligan, Carol},
    year = {1993},
}

@book{gardiner_perfect_2011,
    title = {A perfect moral storm: {The} ethical tragedy of climate change},
    isbn = {0-19-991045-6},
    publisher = {Oxford University Press},
    author = {Gardiner, Stephen M},
    year = {2011},
}

@book{parfit_reasons_1986,
    edition = {1},
    title = {Reasons and {Persons}},
    isbn = {978-0-19-824908-5 978-0-19-159817-3},
    url = {https://academic.oup.com/book/12484},
    language = {en},
    urldate = {2025-11-05},
    publisher = {Oxford University PressOxford},
    author = {Parfit, Derek},
    month = jan,
    year = {1986},
    doi = {10.1093/019824908X.001.0001},
}

@article{crimston_moral_2016,
    title = {Moral expansiveness: {Examining} variability in the extension of the moral world.},
    volume = {111},
    issn = {1939-1315},
    number = {4},
    journal = {Journal of personality and social psychology},
    author = {Crimston, Charlie R and Bain, Paul G and Hornsey, Matthew J and Bastian, Brock},
    year = {2016},
    note = {Publisher: American Psychological Association},
    pages = {636},
}

@article{greene_neural_2004,
    title = {The {Neural} {Bases} of {Cognitive} {Conflict} and {Control} in {Moral} {Judgment}},
    volume = {44},
    issn = {08966273},
    url = {https://linkinghub.elsevier.com/retrieve/pii/S0896627304006348},
    doi = {10.1016/j.neuron.2004.09.027},
    language = {en},
    number = {2},
    urldate = {2025-11-05},
    journal = {Neuron},
    author = {Greene, Joshua D. and Nystrom, Leigh E. and Engell, Andrew D. and Darley, John M. and Cohen, Jonathan D.},
    month = oct,
    year = {2004},
    pages = {389--400},
}

@misc{takeshita_towards_2023,
    title = {Towards {Theory}-based {Moral} {AI}: {Moral} {AI} with {Aggregating} {Models} {Based} on {Normative} {Ethical} {Theory}},
    shorttitle = {Towards {Theory}-based {Moral} {AI}},
    url = {http://arxiv.org/abs/2306.11432},
    doi = {10.48550/arXiv.2306.11432},
    urldate = {2025-11-06},
    publisher = {arXiv},
    author = {Takeshita, Masashi and Rafal, Rzepka and Araki, Kenji},
    month = jun,
    year = {2023},
    note = {arXiv:2306.11432 [cs]},
}

@inproceedings{hegde_ethics_2020,
    address = {Yokohama, Japan},
    title = {Ethics, {Prosperity}, and {Society}: {Moral} {Evaluation} {Using} {Virtue} {Ethics} and {Utilitarianism}},
    isbn = {978-0-9992411-6-5},
    shorttitle = {Ethics, {Prosperity}, and {Society}},
    url = {https://www.ijcai.org/proceedings/2020/24},
    doi = {10.24963/ijcai.2020/24},
    language = {en},
    urldate = {2025-11-06},
    booktitle = {Proceedings of the {Twenty}-{Ninth} {International} {Joint} {Conference} on {Artificial} {Intelligence}},
    publisher = {International Joint Conferences on Artificial Intelligence Organization},
    author = {Hegde, Aditya and Agarwal, Vibhav and Rao, Shrisha},
    month = jul,
    year = {2020},
    pages = {167--174},
}

@article{white_mapping_2024,
    title = {Mapping the ethic-theoretical foundations of artificial intelligence research},
    volume = {66},
    copyright = {© 2024 The Authors. Thunderbird International Business Review published by Wiley Periodicals LLC.},
    issn = {1520-6874},
    url = {https://onlinelibrary.wiley.com/doi/abs/10.1002/tie.22368},
    doi = {10.1002/tie.22368},
    language = {en},
    number = {2},
    urldate = {2025-11-06},
    journal = {Thunderbird International Business Review},
    author = {White, Gareth R. T. and Samuel, Anthony and Jones, Paul and Madhavan, Naveen and Afolayan, Ademola and Abdullah, Ahmed and Kaushik, Tanmay},
    year = {2024},
    note = {\_eprint: https://onlinelibrary.wiley.com/doi/pdf/10.1002/tie.22368},
    pages = {171--183},
}

@inproceedings{preniqi_moralbert_2024,
    address = {Bremen Germany},
    title = {{MoralBERT}: {A} {Fine}-{Tuned} {Language} {Model} for {Capturing} {Moral} {Values} in {Social} {Discussions}},
    isbn = {979-8-4007-1094-0},
    shorttitle = {{MoralBERT}},
    url = {https://dl.acm.org/doi/10.1145/3677525.3678694},
    doi = {10.1145/3677525.3678694},
    language = {en},
    urldate = {2025-11-05},
    booktitle = {Proceedings of the 2024 {International} {Conference} on {Information} {Technology} for {Social} {Good}},
    publisher = {ACM},
    author = {Preniqi, Vjosa and Ghinassi, Iacopo and Ive, Julia and Saitis, Charalampos and Kalimeri, Kyriaki},
    month = sep,
    year = {2024},
    pages = {433--442},
}

@incollection{simon_bounded_1990,
    address = {London},
    title = {Bounded {Rationality}},
    isbn = {978-1-349-20568-4},
    url = {https://doi.org/10.1007/978-1-349-20568-4_5},
    language = {en},
    urldate = {2025-11-06},
    booktitle = {Utility and {Probability}},
    publisher = {Palgrave Macmillan UK},
    author = {Simon, Herbert A.},
    editor = {Eatwell, John and Milgate, Murray and Newman, Peter},
    year = {1990},
    doi = {10.1007/978-1-349-20568-4_5},
    pages = {15--18},
}

@article{gabriel_artificial_2020,
    title = {Artificial intelligence, values, and alignment},
    volume = {30},
    issn = {0924-6495},
    number = {3},
    journal = {Minds and machines},
    publisher = {Springer},
    author = {Gabriel, Iason},
    year = {2020},
    pages = {411--437},
}

@book{piaget_origins_1952,
    title = {The origins of intelligence in children},
    volume = {8},
    publisher = {International universities press New York},
    author = {Piaget, Jean and Cook, Margaret},
    year = {1952},
    note = {Issue: 5},
}

@article{waytz_who_2010,
    title = {Who {Sees} {Human}?: {The} {Stability} and {Importance} of {Individual} {Differences} in {Anthropomorphism}},
    volume = {5},
    copyright = {https://journals.sagepub.com/page/policies/text-and-data-mining-license},
    issn = {1745-6916, 1745-6924},
    shorttitle = {Who {Sees} {Human}?},
    url = {https://journals.sagepub.com/doi/10.1177/1745691610369336},
    doi = {10.1177/1745691610369336},
    language = {en},
    number = {3},
    urldate = {2026-01-21},
    journal = {Perspectives on Psychological Science},
    author = {Waytz, Adam and Cacioppo, John and Epley, Nicholas},
    month = may,
    year = {2010},
    pages = {219--232},
}

@article{greene_cognitive_2008,
    title = {Cognitive load selectively interferes with utilitarian moral judgment},
    volume = {107},
    issn = {0010-0277},
    doi = {10.1016/j.cognition.2007.11.004},
    language = {eng},
    number = {3},
    journal = {Cognition},
    author = {Greene, Joshua D. and Morelli, Sylvia A. and Lowenberg, Kelly and Nystrom, Leigh E. and Cohen, Jonathan D.},
    month = jun,
    year = {2008},
    pages = {1144--1154},
}

@article{conway_deontological_2013,
    title = {Deontological and utilitarian inclinations in moral decision making: a process dissociation approach},
    volume = {104},
    issn = {1939-1315},
    shorttitle = {Deontological and utilitarian inclinations in moral decision making},
    doi = {10.1037/a0031021},
    language = {eng},
    number = {2},
    journal = {Journal of Personality and Social Psychology},
    author = {Conway, Paul and Gawronski, Bertram},
    month = feb,
    year = {2013},
    pages = {216--235},
}

@article{greene_beyond_2014,
    title = {Beyond {Point}-and-{Shoot} {Morality}: {Why} {Cognitive} ({Neuro}){Science} {Matters} for {Ethics}},
    volume = {124},
    issn = {0014-1704, 1539-297X},
    shorttitle = {Beyond {Point}-and-{Shoot} {Morality}},
    url = {https://www.journals.uchicago.edu/doi/10.1086/675875},
    doi = {10.1086/675875},
    language = {en},
    number = {4},
    urldate = {2026-01-21},
    journal = {Ethics},
    author = {Greene, Joshua D.},
    month = jul,
    year = {2014},
    pages = {695--726},
}

@article{kahane_beyond_2018,
    address = {US},
    title = {Beyond sacrificial harm: {A} two-dimensional model of utilitarian psychology},
    volume = {125},
    issn = {1939-1471},
    shorttitle = {Beyond sacrificial harm},
    doi = {10.1037/rev0000093},
    number = {2},
    journal = {Psychological Review},
    publisher = {American Psychological Association},
    author = {Kahane, Guy and Everett, Jim A. C. and Earp, Brian D. and Caviola, Lucius and Faber, Nadira S. and Crockett, Molly J. and Savulescu, Julian},
    year = {2018},
    pages = {131--164},
}

@article{paxton_reflection_2012,
    address = {United Kingdom},
    title = {Reflection and reasoning in moral judgment},
    volume = {36},
    issn = {1551-6709},
    doi = {10.1111/j.1551-6709.2011.01210.x},
    number = {1},
    journal = {Cognitive Science},
    publisher = {Wiley-Blackwell Publishing Ltd.},
    author = {Paxton, Joseph M. and Ungar, Leo and Greene, Joshua D.},
    year = {2012},
    pages = {163--177},
}

@article{suter_time_2011,
    address = {Netherlands},
    title = {Time and moral judgment},
    volume = {119},
    issn = {1873-7838},
    doi = {10.1016/j.cognition.2011.01.018},
    number = {3},
    journal = {Cognition},
    publisher = {Elsevier Science},
    author = {Suter, Renata S. and Hertwig, Ralph},
    year = {2011},
    pages = {454--458},
}

@article{baron_protected_1997,
    title = {Protected {Values}},
    volume = {70},
    issn = {1096-0341},
    doi = {10.1006/obhd.1997.2690},
    language = {eng},
    number = {1},
    journal = {Virology},
    author = {Baron, J. and Spranca, M.},
    month = apr,
    year = {1997},
    pages = {1--16},
}

@article{fetherstonhaugh_insensitivity_1997,
    title = {Insensitivity to the {Value} of {Human} {Life}: {A} {Study} of {Psychophysical} {Numbing}},
    volume = {14},
    issn = {1573-0476},
    shorttitle = {Insensitivity to the {Value} of {Human} {Life}},
    url = {https://doi.org/10.1023/A:1007744326393},
    doi = {10.1023/A:1007744326393},
    language = {en},
    number = {3},
    urldate = {2026-01-21},
    journal = {Journal of Risk and Uncertainty},
    author = {FETHERSTONHAUGH, DAVID and SLOVIC, PAUL and JOHNSON, STEPHEN and FRIEDRICH, JAMES},
    month = may,
    year = {1997},
    pages = {283--300},
}

@article{dickert_scope_2015,
    series = {Modeling and {Aiding} {Intuition} in {Organizational} {Decision} {Making}},
    title = {Scope insensitivity: {The} limits of intuitive valuation of human lives in public policy},
    volume = {4},
    issn = {2211-3681},
    shorttitle = {Scope insensitivity},
    url = {https://www.sciencedirect.com/science/article/pii/S2211368114000795},
    doi = {10.1016/j.jarmac.2014.09.002},
    number = {3},
    urldate = {2026-01-21},
    journal = {Journal of Applied Research in Memory and Cognition},
    author = {Dickert, Stephan and Västfjäll, Daniel and Kleber, Janet and Slovic, Paul},
    month = sep,
    year = {2015},
    pages = {248--255},
}

@article{buon_non-mentalistic_2013,
    title = {A non-mentalistic cause-based heuristic in human social evaluations},
    volume = {126},
    issn = {1873-7838},
    doi = {10.1016/j.cognition.2012.09.006},
    language = {eng},
    number = {2},
    journal = {Cognition},
    author = {Buon, Marine and Jacob, Pierre and Loissel, Elsa and Dupoux, Emmanuel},
    month = feb,
    year = {2013},
    pages = {149--155},
}

@article{martin_effect_2021,
    title = {The {Effect} of {Cognitive} {Load} on {Intent}-{Based} {Moral} {Judgment}},
    volume = {45},
    copyright = {© 2021 Cognitive Science Society LLC},
    issn = {1551-6709},
    url = {https://onlinelibrary.wiley.com/doi/abs/10.1111/cogs.12965},
    doi = {10.1111/cogs.12965},
    language = {en},
    number = {4},
    urldate = {2026-01-21},
    journal = {Cognitive Science},
    author = {Martin, Justin W. and Buon, Marine and Cushman, Fiery},
    year = {2021},
    note = {\_eprint: https://onlinelibrary.wiley.com/doi/pdf/10.1111/cogs.12965},
    pages = {e12965},
}

@inproceedings{conitzer_communication_2005,
    address = {Vancouver BC Canada},
    title = {Communication complexity of common voting rules},
    isbn = {978-1-59593-049-1},
    url = {https://dl.acm.org/doi/10.1145/1064009.1064018},
    doi = {10.1145/1064009.1064018},
    language = {en},
    urldate = {2026-01-21},
    booktitle = {Proceedings of the 6th {ACM} conference on {Electronic} commerce},
    publisher = {ACM},
    author = {Conitzer, Vincent and Sandholm, Tuomas},
    month = jun,
    year = {2005},
    pages = {78--87},
}

@misc{conitzer_social_2024,
    title = {Social {Choice} {Should} {Guide} {AI} {Alignment} in {Dealing} with {Diverse} {Human} {Feedback}},
    url = {http://arxiv.org/abs/2404.10271},
    doi = {10.48550/arXiv.2404.10271},
    urldate = {2026-01-17},
    publisher = {arXiv},
    author = {Conitzer, Vincent and Freedman, Rachel and Heitzig, Jobst and Holliday, Wesley H. and Jacobs, Bob M. and Lambert, Nathan and Mossé, Milan and Pacuit, Eric and Russell, Stuart and Schoelkopf, Hailey and Tewolde, Emanuel and Zwicker, William S.},
    month = jun,
    year = {2024},
    note = {arXiv:2404.10271 [cs]},
}

@article{levine_resource_2025,
    title = {Resource {Rational} {Contractualism} {Should} {Guide} {AI} {Alignment}},
    journal = {arXiv preprint arXiv:2506.17434},
    author = {Levine, Sydney and Franklin, Matija and Zhi-Xuan, Tan and Guyot, Secil Yanik and Wong, Lionel and Kilov, Daniel and Choi, Yejin and Tenenbaum, Joshua B and Goodman, Noah and Lazar, Seth},
    year = {2025},
}

@article{levine_resource-rational_2024,
    title = {Resource-rational contractualism: {A} triple theory of moral cognition},
    issn = {0140-525X, 1469-1825},
    shorttitle = {Resource-rational contractualism},
    url = {https://www.cambridge.org/core/journals/behavioral-and-brain-sciences/article/abs/resourcerational-contractualism-a-triple-theory-of-moral-cognition/5A567D41A472DBC0D965460966580C74},
    doi = {10.1017/S0140525X24001067},
    language = {en},
    urldate = {2026-01-21},
    journal = {Behavioral and Brain Sciences},
    author = {Levine, Sydney and Chater, Nick and Tenenbaum, Joshua B. and Cushman, Fiery},
    month = oct,
    year = {2024},
    pages = {1--38},
}

@article{griffiths_rational_2015,
    title = {Rational {Use} of {Cognitive} {Resources}: {Levels} of {Analysis} {Between} the {Computational} and the {Algorithmic}},
    volume = {7},
    issn = {1756-8757, 1756-8765},
    shorttitle = {Rational {Use} of {Cognitive} {Resources}},
    url = {https://onlinelibrary.wiley.com/doi/10.1111/tops.12142},
    doi = {10.1111/tops.12142},
    language = {en},
    number = {2},
    urldate = {2026-01-21},
    journal = {Topics in Cognitive Science},
    author = {Griffiths, Thomas L. and Lieder, Falk and Goodman, Noah D.},
    month = apr,
    year = {2015},
    pages = {217--229},
}

@book{anderson_adaptive_2013,
    edition = {1},
    title = {The {Adaptive} {Character} of {Thought}},
    isbn = {978-0-203-77173-0},
    url = {https://www.taylorfrancis.com/books/9780203771730},
    doi = {10.4324/9780203771730},
    language = {en},
    urldate = {2026-01-21},
    publisher = {Psychology Press},
    author = {Anderson, John R.},
    month = jan,
    year = {2013},
}

@incollection{chugh_bounded_2005,
    address = {New York, NY, US},
    title = {Bounded {Ethicality} as a {Psychological} {Barrier} to {Recognizing} {Conflicts} of {Interest}},
    isbn = {978-0-521-84439-0},
    doi = {10.1017/CBO9780511610332.006},
    booktitle = {Conflicts of interest: {Challenges} and solutions in business, law, medicine, and public policy},
    publisher = {Cambridge University Press},
    author = {Chugh, Dolly and Bazerman, Max H. and Banaji, Mahzarin R.},
    year = {2005},
    pages = {74--95},
}

@article{tenbrunsel_13_2008,
    title = {13 {Ethical} {Decision} {Making}: {Where} {We}’ve {Been} and {Where} {We}’re {Going}},
    volume = {2},
    issn = {1941-6520, 1941-6067},
    shorttitle = {13 {Ethical} {Decision} {Making}},
    url = {http://journals.aom.org/doi/10.5465/19416520802211677},
    doi = {10.5465/19416520802211677},
    language = {en},
    number = {1},
    urldate = {2026-01-21},
    journal = {Academy of Management Annals},
    author = {Tenbrunsel, Ann E. and Smith‐Crowe, Kristin},
    month = jan,
    year = {2008},
    pages = {545--607},
}

@book{marr_vision_2010,
    address = {Cambridge, Mass},
    title = {Vision: a computational investigation into the human representation and processing of visual information},
    isbn = {978-0-262-51462-0 978-0-262-28961-0},
    shorttitle = {Vision},
    language = {eng},
    publisher = {MIT Press},
    author = {Marr, David},
    year = {2010},
}

@article{sunstein_social_1996,
    title = {Social {Norms} and {Social} {Roles}},
    volume = {96},
    issn = {00101958},
    url = {https://www.jstor.org/stable/1123430?origin=crossref},
    doi = {10.2307/1123430},
    number = {4},
    urldate = {2026-01-21},
    journal = {Columbia Law Review},
    author = {Sunstein, Cass R.},
    month = may,
    year = {1996},
    pages = {903},
}

\newpage
\appendix

\section{Formal Definitions and Notation}

\begin{table}[h!]
\centering
\small
\begin{tabular}{@{}llp{6.2cm}@{}}
\toprule
Symbol / Term & Name & Definition / Role \\
\midrule
$G^\star=(V^\star,E^\star)$ 
& Moral interaction graph 
& Full graph of morally relevant entities and their local influence relations. \\

$v\in V^\star$ 
& Moral entity 
& Node whose state contributes directly to moral welfare. \\

$\mathcal{S}_v$ 
& Local state space 
& State space associated with entity $v$. \\

$A$ 
& Action set 
& Finite set of atomic interventions available at each time step. \\

$\mathcal{A}_H=A^H$ 
& Action sequences 
& Length-$H$ intervention sequences (temporally extended control policies). \\

$F$ 
& Moral dynamics 
& Local graph-structured state update rule governing trajectory evolution. \\

$U(s)$ 
& Instantaneous welfare 
& Graph-structured welfare function decomposing into node and edge potentials. \\

$M^\star(\alpha,\omega)$ 
& Ground-truth moral value 
& Infinite-horizon discounted welfare induced by action sequence $\alpha$. \\

\addlinespace
$\pi_V:V^\star\to V$ 
& Aggregation map 
& Surjection defining coarse-grained moral representation. \\

$G=(V,E)$ 
& Abstract moral representation 
& Coarse interaction graph induced by aggregation. \\

$b(G)=|V|+|E|$ 
& Breadth 
& Representational complexity of abstraction $G$. \\

$H$ 
& Depth 
& Rollout horizon for consequence propagation. \\

$\hat{M}_{G,H}$ 
& Truncated objective 
& Finite-horizon approximation to $M^\star$ under abstraction $G$. \\

\addlinespace
$\mathrm{Cost}_{\mathrm{info}}(G)$ 
& Informational cost 
& Cost of encoding representation $G$; increasing in $b(G)$. \\

$\mathrm{Cost}_{\mathrm{infer}}(G,H)$ 
& Inferential cost 
& Cost of simulating and optimizing over length-$H$ trajectories under $G$. \\

$\mathrm{Cost}(G,H)$ 
& Total cost 
& Sum of informational and inferential costs. \\

$B$ 
& Resource budget 
& Upper bound on allowable computational cost. \\

$\rho$ 
& Allocation rule 
& Strategy mapping $(\omega,B)$ to feasible $(G,H)$. \\

$\pi^\rho_B$ 
& Bounded moral policy 
& Intervention sequence selected under allocation rule $\rho$. \\

$R(\omega;\rho,B)$ 
& State-dependent regret 
& Loss in ground-truth value relative to optimal unbounded intervention. \\

$\mathsf{P}$ 
& World distribution 
& Distribution over world states used to evaluate expected regret. \\

\bottomrule
\end{tabular}
\caption{\textbf{Comprehensive glossary of formal objects used in the Bounded Morality framework.} This table consolidates the mathematical definitions introduced in Section~\ref{sec:bounded} for reference.}
\label{tab:bounded_morality_glossary}
\end{table}

\newpage

\section{A Worked Example of Bounded Morality}
\label{app:worked_example}

We illustrate the Bounded Morality framework with a minimal model of content moderation that instantiates the moral interaction graph, intervention-driven dynamics, graph-structured welfare, abstraction, and regret defined in Section~\ref{sec:bounded}. The purpose is to show explicitly how a concrete policy problem maps onto the formal objects of the theory. The example demonstrates that computational constraints—specifically limited inferential depth and limited representational breadth—can invert moral decisions even when the welfare function and dynamics are fully known and optimization within a fixed representation is exact.

Two phenomena emerge:

\begin{enumerate}
\item \textbf{Depth Reversal:} Truncated evaluation ($H$ small) favors aggressive intervention, while longer rollouts ($H$ large) favor targeted intervention.
\item \textbf{Breadth Constraint:} Coarse abstraction can eliminate the targeted intervention from the admissible action set.
\end{enumerate}

\subsection{Instantiation of the Moral System}

\paragraph{Moral Interaction Graph.}
Let
\[
V^\star=\{E_1,E_2,C,M_1,M_2\},
\]
representing two extremist nodes, a connector node, and two moderate nodes. 

The moral interaction graph $G^\star=(V^\star,E^\star)$ is
\[
E^\star=
\big\{
\{E_1,E_2\},
\{E_1,C\},\{E_2,C\},
\{C,M_1\},\{C,M_2\},
\{M_1,M_2\}
\big\}.
\]

The connector $C$ is the unique pathway through which polarization flows from extremists to moderates. 

\paragraph{State Space.}
Each node $v\in V^\star$ has polarization state $s_v(t)\in\mathbb{R}$ and resentment state $r_v(t)\in\mathbb{R}$. 
The full moral state is 
$s(t)=(s_v(t))_{v\in V^\star}\in\mathbb{R}^5$.

\paragraph{Actions.}
An intervention is a sanction vector
$a\in A=\{0,1\}^5,$
applied at $t=0$ only. 
Thus, although the general framework allows $\alpha\in A^H$, here we reduce $\mathcal{A}_H(G^\star)$ to a single sanction decision.

Sanctioning has two effects:
\begin{align}
s_v(0) &=
\begin{cases}
s_{\mathrm{san}} & a_v=1,\\
\omega_v & a_v=0,
\end{cases}
\\
r(0) &= r_0 a.
\end{align}

Sanctioning reduces polarization immediately but induces resentment.

\paragraph{Intervention-Driven Dynamics.}

Let $W$ be the adjacency matrix of $G^\star$ with unit edge weights and self-loops. 
Define a sanction-modulated influence matrix:
\[
W(a)=W\,\mathrm{diag}(w(a)),
\quad
w_u(a)=
\begin{cases}
k & a_u=1,\\
1 & a_u=0,
\end{cases}
\]
and normalize rows to obtain $P(a)$.

Polarization and resentment evolve as:
\begin{align}
s(t+1) &= \beta P(a)s(t)+\alpha r(t), \\
r(t+1) &= (1-\delta)r(t).
\end{align}

Because $P(a)$ is row-stochastic and $\beta<1$, the spectral radius of $\beta P(a)$ is strictly less than one, i.e., the system is globally stable. 
Reversal effects therefore arise from truncation rather than instability.

\paragraph{Graph-Structured Welfare.}

Let $n=|V^\star|$ and $m= |E^\star|$. Define instantaneous welfare:
\[
U(s)
=
-\frac1n\sum_{v\in V^\star} s_v^2
-
\frac{\lambda}{m}
\sum_{\{u,v\}\in E^\star}(s_u-s_v)^2,
\]

The first term penalizes polarization magnitude; the second penalizes disagreement across edges.

Ground-truth moral value is
\[
M^\star(a,\omega)
=
\sum_{t=0}^{\infty}\gamma^t U(s(t)).
\]

\paragraph{Finite-Horizon Approximation.}

A bounded planner with depth $H$ evaluates:
\[
\hat M_{G^\star,H}(a,\omega)
=
\sum_{t=0}^{H}\gamma^t U(s(t)).
\]

Truncation introduces approximation error relative to $M^\star$.

\subsection{Numerical Illustration}

Parameters:
\begin{align*}
\beta &= 0.9, & \alpha &= 0.2, \\
\delta &= 0.05, & r_0 &= 1.0, \\
k &= 0.1, & \gamma &= 0.9, \\
\lambda &= 0.1, & s_{\mathrm{san}} &= 0.2.
\end{align*}

Initial polarization:
\[
\omega_{E_1}=\omega_{E_2}=1,\quad
\omega_C=0.6,\quad
\omega_{M_1}=\omega_{M_2}=0.
\]

Candidate actions:
\[
a^{(0)}=\mathbf{0},\quad
a^{(E)}=\mathbf{1}_{\{E_1,E_2\}},\quad
a^{(EC)}=\mathbf{1}_{\{E_1,E_2,C\}}.
\]

\begin{table}[h!]
\centering
\begin{tabular}{c|ccc}
$H$ & $a^{(0)}$ & $a^{(E)}$ & $a^{(EC)}$ \\
\hline
2  & -0.765 & -0.297 & \textbf{-0.088} \\
10 & -1.304 & \textbf{-0.581} & -0.715 \\
\infty & -1.348 & \textbf{-0.634} & -0.955 \\
\end{tabular}
\caption{Finite-horizon value $\hat M_{G^\star,H}(a,\omega)$ and infinite-horizon value $M^\star(a,\omega)$ (approximated using rollout to 2000 steps). Bold indicates the maximizer.}
\end{table}

\begin{itemize}
    \item At depth $H=2$, sanctioning extremists and the connector is optimal: $\pi_{G^\star,2}(\omega)=a^{(EC)}.$
    \item At depth $H=10$, sanctioning extremists only is optimal: $\pi_{G^\star,10}(\omega)=a^{(E)}.$
    \item Under the infinite-horizon objective $M^\star$, sanctioning extremists only is optimal: $a^\star(\omega)=a^{(E)}.$
\end{itemize}

The depth-$2$ planner instead selects $a^{(EC)}$, yielding state-dependent regret
\[
R(\omega;\rho^{(2)},B)
=
M^\star(a^{(E)},\omega)
-
M^\star(a^{(EC)},\omega)
\approx
0.322.
\]
By contrast, the depth-$10$ planner selects $a^{(E)}$ and therefore incurs zero regret within this action class.

\paragraph{Mechanism of Depth Reversal.}
With a short horizon, the planner mainly sees the immediate benefit of cutting the bridge between extremists and moderates: sanctioning $C$ quickly reduces visible spillover, which appears strongly beneficial in the first few steps. However, sanctioning $C$ also creates resentment at a highly connected node. That resentment spreads gradually through all of $C$’s links before fading, eventually affecting the entire network. A shallow evaluation captures the fast reduction in spillover but misses this slower, system-wide backlash. A deeper evaluation sees both effects, and once the delayed costs are included, targeted sanctioning becomes preferable.

\begin{figure}[t]
\centering
\begin{tikzpicture}[scale=1.1, every node/.style={font=\small}]

\node[draw,circle,minimum size=7mm] (E1) at (0,2) {$E_1$};
\node[draw,circle,minimum size=7mm] (E2) at (2,2) {$E_2$};
\node[draw,circle,minimum size=7mm] (C)  at (1,1) {$C$};
\node[draw,circle,minimum size=7mm] (M1) at (0,0) {$M_1$};
\node[draw,circle,minimum size=7mm] (M2) at (2,0) {$M_2$};

\draw (E1)--(E2);
\draw (E1)--(C);
\draw (E2)--(C);
\draw (C)--(M1);
\draw (C)--(M2);
\draw (M1)--(M2);

\node at (1,-0.8) {$G^\star$};

\draw[->,thick] (3.2,1) -- (4.4,1);
\node at (3.8,1.3) {$\pi_V$};

\node[draw,circle,minimum size=9mm] (X) at (6,1.6) {$X$};
\node[draw,circle,minimum size=9mm] (Y) at (6,0.2) {$Y$};

\draw (X)--(Y);

\node at (6,-0.8) {$G$};

\end{tikzpicture}
\caption{
Ground-truth moral interaction graph $G^\star$ (left) and a coarse abstraction $G$ (right) induced by aggregation map $\pi_V$ with $X=\{E_1,E_2,C\}$ and $Y=\{M_1,M_2\}$.
}
\label{fig:worked_example}
\end{figure}

\subsection*{Breadth Constraints}

We now examine how abstraction alters the feasible intervention set and therefore the attainable moral value.

\paragraph{Medium abstraction (3 nodes).}
Let
\[
\pi_V(E_1)=\pi_V(E_2)=X_E,\quad
\pi_V(C)=X_C,\quad
\pi_V(M_1)=\pi_V(M_2)=X_M.
\]
This representation preserves the morally relevant distinction between extremists and the connector. 
Under this abstraction, the action $a^{(E)}$ remains admissible, and with sufficient depth the planner can recover the infinite-horizon optimum.

\paragraph{Coarse abstraction (2 nodes).}
Let
\[
\pi_V(E_1)=\pi_V(E_2)=\pi_V(C)=X,\quad
\pi_V(M_1)=\pi_V(M_2)=Y.
\]
Admissible actions must satisfy
\[
a_{E_1}=a_{E_2}=a_C,
\]
so $a^{(E)}\notin\mathcal{A}_H(G_{\text{coarse}})$.

Under this abstraction (Figure \ref{fig:worked_example}), the planner cannot implement the infinite-horizon optimal action. 
Even with large depth, the best admissible intervention coincides with $a^{(EC)}$, whose infinite-horizon value is
\[
M^\star(a^{(EC)},\omega)\approx -0.955,
\]
strictly below the optimum
\[
M^\star(a^{(E)},\omega)\approx -0.634.
\]
The abstraction therefore induces an irreducible regret of approximately $0.322$, independent of depth.

\subsection*{Regret and Moral Progress}

Let $\rho^{(2)}(\omega,B)=(G^\star,2)$ and
$\rho^{(10)}(\omega,B)=(G^\star,10)$, assuming the budget permits either allocation.

Using the infinite-horizon values above,
\[
R(\omega;\rho^{(2)},B)\approx 0.322,
\qquad
R(\omega;\rho^{(10)},B)=0.
\]

Under the coarse abstraction $G_{\text{coarse}}$, regret of the same magnitude persists even with large depth, since the optimal action is infeasible.

\bigskip
\noindent
\textbf{Takeaway.}
Limited depth hides delayed consequences.
Limited breadth hides feasible interventions.
Both forms of regret arise from constrained computation rather than disagreement about values.
Moral progress corresponds to improving how computational resources are allocated across these two dimensions.

\end{document}
